\theoremstyle{plain}
\newtheorem{theorem}{Theorem}
\newtheorem{lemma}{Lemma}
\theoremstyle{definition}
\newtheorem{definition}{Definition}
\theoremstyle{remark}
\begin{document}

\begin{frontmatter}



\title{Spectral Clustering for Discrete Distributions}


\author[aff1]{Zixiao Wang}
\author[aff1]{Dong Qiao}
\author[aff1,aff2]{Jicong Fan\corref{cor1}}
\ead{fanjicong@cuhk.edu.cn}

\affiliation[aff1]{organization={School of Data Science, The Chinese University of Hong Kong, Shenzhen},
                    addressline={2001 Longxiang Boulevard, Longgang District}, 
                    city={Shenzhen},
                    postcode={518172}, 
                    state={Guangdong},
                    country={China}}
\affiliation[aff2]{organization={Shenzhen Research Institute of Big Data},
                    addressline={2001 Longxiang Boulevard, Longgang District}, 
                    city={Shenzhen},
                    postcode={518172}, 
                    state={Guangdong},
                    country={China}}
\cortext[cor1]{Corresponding author.}

\begin{abstract}
The discrete distribution is often used to describe complex instances in machine learning, such as images, sequences, and documents. Traditionally, clustering of discrete distributions (D2C) has been approached using Wasserstein barycenter methods. These methods operate under the assumption that clusters can be well-represented by barycenters, which is seldom true in many real-world applications. Additionally, these methods are not scalable for large datasets due to the high computational cost of calculating Wasserstein barycenters. In this work, we explore the feasibility of using spectral clustering combined with distribution affinity measures (e.g., maximum mean discrepancy and Wasserstein distance) to cluster discrete distributions. We demonstrate that these methods can be more accurate and efficient than barycenter methods. To further enhance scalability, we propose using linear optimal transport to construct affinity matrices efficiently for large datasets. We provide theoretical guarantees for the success of our methods in clustering distributions. Experiments on both synthetic and real data show that our methods outperform existing baselines.
\end{abstract}



\begin{keyword}
discrete distributions \sep spectral clustering \sep optimal transport distance \sep maximum mean discrepancy


\end{keyword}

\end{frontmatter}


\section{Introduction and Related Works}\label{sec:introduction}
Clustering is a fundamental task in machine learning, with various methods developed over the decades \citep{shi2000normalized,Jain_Murty_Flynn_2002,ng2002spectral,sc2004,Kriegel2009,SSC_PAMIN_2013,zhang2015smart, campello2015hierachical, 10.1145/3132088,zhang2018binary,HUANG2021107996,kang2021structured,fankdd21, cai2022efficient,NEURIPS2022_fan, sun2023laplacian,ZHAO2023109836}. raditional methods like k-means \cite{macqueen1967some,bezdek1984fcm}, spectral clustering \cite{shi2000normalized,ng2002spectral}, subspace clustering \cite{bradley2000k,parsons2004subspace}, and deep clustering \cite{xie2016unsupervised,cai2022efficient} focus on clustering individual samples represented as vectors. However, many data types are not naturally vectors and are better described as discrete distributions. For instance, the bag-of-words model for documents is a discrete distribution.

\textbf{However, many data types are intrinsically not vectors}. Instead, they can be naturally described as discrete distributions. For example, the widely used a-bag-of-word data model represents each document as a discrete distribution. To cluster discrete distributions, one may consider representing each batch of data as a vector (e.g., the mean value of the batch), which will lose important information about the batch and lead to unsatisfactory clustering results. Alternatively, one may cluster all the data points without considering the batch condition and use post-processing to obtain the partition over batches, which does not explicitly exploit the distribution of each batch in the learning stage and leads to unsatisfactory performance. Thus, it is crucial to develop efficient and effective methods for clustering discrete distributions rather than relying on pre-processing or post-processing heuristics.

Several methods based on the \textit{D2-Clustering} framework \citep{Li_Wang_2007} have been developed for clustering discrete distributions. D2-Clustering operates similarly to traditional K-means but with key differences: it clusters discrete distributions instead of vectors, uses the Wasserstein distance instead of Euclidean distance, and calculates Wasserstein barycenters instead of K-means centroids. The Wasserstein distance, derived from optimal transport theory \citep{villani2009optimal}, captures meaningful geometric features between probability measures and effectively compares discrete distributions. The Wasserstein barycenter \citep{wassersteinbarycenter2011} of multiple discrete distributions is a discrete distribution that minimizes the sum of Wasserstein distances to the given distributions. Despite the competitive practical performance of D2-Clustering, it has several limitations:
\begin{itemize}
    \item \textbf{Hardness of Wasserstein distance calculation.} The Wasserstein distance has no closed-form solution. Given two discrete distributions, each with $m$ supports, the Wasserstein distance between them can only be solved with a worst-case time complexity $O(m^3 \log m)$ \citep{orlin1988faster}.
    \item \textbf{Hardness of barycenter calculation.} Solving the true discrete Wasserstein barycenter quickly is intractable even for a small number of distributions that contains a small number of support \citep{Anderes_Borgwardt_Miller_2015}. A large number of representative initial supports are required to achieve a good barycenter approximation.
    \item \textbf{Unrealistic assumption.} D2-Clustering assumes the distributions concentrate at some centroids, which may not hold in real applications. Please refer to Figure~\ref{fig:toy_d2c} for an intuitive explanation.
    \item \textbf{Lacking theoretical guarantees} The study on theoretical guarantees for the success of D2-Clustering is very limited. It is unclear under what conditions we can cluster the distributions correctly.
\end{itemize}

We address the aforementioned issues in this work. Our contributions are two-fold:
\begin{itemize}
    \item  We propose to use spectral clustering and distribution affinity measures (e.g., Maximum Mean Discrepancy and Wasserstein distance) to cluster discrete distribution, which yields new methods (four new algorithms) that are extremely easy to implement and do not need careful initialization. 
    \item We provide theoretical guarantees for the consistency and correctness of clustering for our proposed methods. The theoretical results provide strong support for practical applications.
\end{itemize}
We evaluate our methods' accuracy, robustness, and scalability compared to baselines on both synthetic and real data. Empirical results show that our methods outperform the baselines remarkably.

\textbf{Notations} ~Let $\Omega$ be a compact subset of $\mathbb{R}^d$, and let $P(\Omega)$ be the set of Borel probability measures on $\Omega$.The diameter of $\Omega$, denoted as $|\Omega|$, is defined as $\sup\{|\mathbf{x} - \mathbf{x}'| : \mathbf{x}, \mathbf{x}' \in \Omega\}$. $\mathcal{C}(\Omega)$ represents the set of all continuous functions on $\Omega$. For any $\mathbf{x} \in \Omega$, $\delta_\mathbf{x}$ denotes the Dirac unit mass at $\mathbf{x}$. Let $\mathbf{X} = (\mathbf{x}_1, \dots, \mathbf{x}_n)$ and $\mathbf{Y} = (\mathbf{y}_1, \dots, \mathbf{y}_m)$ be two sets of points in $\Omega$, which are finite samples from some underlying distributions. The points $\mathbf{x}_i$ and $\mathbf{y}_i$ are referred to as the support points of these distributions. For simplicity, assume $m = n$. The $n$-dimensional probability simplex is defined as $\Sigma_n := \{\mathbf{x} \in \mathbb{R}^n_+ \mid \mathbf{x}^\top\mathbf{1}_n = 1\}$. The cost function $c(\cdot, \cdot)$ belongs to $\mathcal{C}^{\infty}$ and is $L$-Lipschitz continuous.

With slight abuse of notation, let $\mathbf{X}_i = [\mathbf{x}_1, \mathbf{x}_2, \ldots, \mathbf{x}_{m_i}]^\top \in \mathbb{R}^{m_i \times d}$ denote samples independently drawn from some underlying distribution $\mu_i$ for $i = 1, 2, \ldots, N$. We use $\mathbf{D}$, $\mathbf{A}$, and $\mathbf{L}$ to denote the distance, adjacency, and Laplacian matrices, respectively. The norm $\|\cdot\|_\infty$ represents the maximum element of a matrix.

\section{Preliminary Knowledge}\label{sec_pre}

Measuring the divergence or distance between distributions is a fundamental problem in statistics, information theory, and machine learning \citep{kolouri2017optimal}. Common measures include Kullback-Leibler (KL) divergence, $\chi^2$-divergence, Maximum Mean Discrepancy (MMD) \citep{gretton2006kernel,gretton2012kernel}, Wasserstein distance \citep{panaretos2019statistical}, and Sinkhorn divergence \citep{cuturi2013Sinkhorn}. We will briefly introduce MMD, Wasserstein distance, and Sinkhorn divergence as they are widely used for comparing discrete distributions and will be integral to our proposed clustering methods.

\subsection{Maximum Mean Discrepancy}\label{sec_MMD}
Consider two probability distributions $\alpha$ and $\beta$, the maximum mean discrepancy between them is defined by
\begin{equation}
    \mathrm{MMD}^2(\alpha, \beta) := \mathbb{E}_{\alpha \otimes \alpha}[k(X, X')] + \mathbb{E}_{\beta \otimes \beta}[k(Y, Y')]
    - 2\mathbb{E}_{\alpha \otimes \beta}[k(X, Y)].
\end{equation}
where $k(\cdot, \cdot)$ is a user defined kernel (e.g. gaussian kernel). An empirical estimation with finite samples $\mathbf{X}, \mathbf{Y}$ from distributions $\alpha, \beta$ is
\begin{equation}
\begin{aligned}
\widehat{\mathrm{MMD}}^2(\mathbf{X}, \mathbf{Y}) :=& 
 \frac{1}{m(m-1)} \sum_{i=1}^m \sum_{j \neq i}^m k\left(\mathbf{x}_i, \mathbf{x}_j\right)
+\frac{1}{m(m-1)} \sum_{i=1}^m \sum_{j \neq i}^m k\left(\mathbf{y}_i, \mathbf{y}_j\right) \\
&-\frac{2}{m^2} \sum_{i=1}^m \sum_{j=1}^m k\left(\mathbf{x}_i, \mathbf{y}_j\right).
\end{aligned}
\end{equation}


\subsection{Wasserstein Distance}\label{sec_WD}
Wasserstein distance has been used extensively in recent years as a powerful tool to compare distributions. It originates from the famous optimal transport (OT) problem. Consider two probability measures $\alpha, \beta \in P(\Omega)$, the Kantorovich formulation \citep{kantorovich1960mathematical} of OT between $\alpha$ and $\beta$ is
\begin{equation}
\min _{\pi \in \Pi(\alpha, \beta)} \int_{\Omega^2}c(x, y) \mathrm{~d} \pi(x, y),
\end{equation}
where $\Pi(\alpha, \beta)$ is the set of all probability measures on $\Omega^2$ that has marginals $\alpha$ and $\beta$. $c(x, y)$ is the cost function representing the cost to move a unit of mass from $x$ to $y$. The $k$-Wasserstein distance is defined as
\begin{equation}
    W_k(\alpha, \beta):=\left( \min _{\pi \in \Pi(\alpha, \beta)} \int_{\Omega^2} \|x - y\|^k \mathrm{~d} \pi(x, y)\right)^{1 / k}.
\end{equation}
In practice, we calculate its discrete approximation using the distributions' finite samples $\mathbf{X}$ and $\mathbf{Y}$. For $\hat{\alpha} = \sum_{i=1}^m a_i \delta_{x_i}$ and $\hat{\beta} = \sum_{i=1}^m b_i \delta_{y_i}$ where $\mathbf{a} \in \Sigma_m$ and $\mathbf{b} \in \Sigma_m$, the $k$-Wasserstein distance is defined by
\begin{equation}
\hat{W}_k(\hat{\alpha}, \hat{\beta}):= \left(\min _{\mathbf{P} \in U(a, b)}\langle \mathbf{P}, \mathbf{C}\rangle\right)^{1/k},
\end{equation}
where $U(\mathbf{a}, \mathbf{b})=\left\{\mathbf{P} \in \mathbb{R}_{+}^{m \times m} \mid \mathbf{P} \mathbf{1}_m=\mathbf{a}, \mathbf{P}^T \mathbf{1}_m=\mathbf{b}\right\}$ and $\mathbf{C}$ is denotes cost matrix whose $(i, j)$-th element $c_{ij} = \|x_i - y_j\|^k$. All Wasserstein distances used in the following content are $2$-Wasserstein distance, so we omit the subscript $k$ to simplify notations.


\subsection{Sinkhorn Divergences}\label{sec_Sinkhorn}
To mitigate the high computational cost of optimal transport (OT), \citet{cuturi2013Sinkhorn} proposed an approximation formulation of the classic OT problem with an entropic regularization term. This approach results in a divergence that can be efficiently computed with a cost of $O(m^2)$ per iteration.

The Sinkhorn divergence is defined as
\begin{equation}\label{def:Sinkhorn}
    W_\varepsilon(\alpha, \beta) := 
    \min _{\pi \in \Pi(\alpha, \beta)} \int_{\Omega^2}c(x, y) \mathrm{~d} \pi(x, y) 
    + \varepsilon H(\pi | \alpha \otimes \beta)
\end{equation}
where $H(\pi | \alpha \otimes \beta):= -\int_{\Omega^2} \log\left(\frac{\mathrm{~d}\pi(x, y)}{\mathrm{~d}\alpha(x)\mathrm{~d}\beta(y)}\right) \mathrm{~d} \pi(x, y)$ \citep{genevay2016stochastic}.

Its discrete estimation version is similarly defined as
\begin{equation}
\hat{W}_\varepsilon(\hat{\alpha}, \hat{\beta}) := \min_{\mathbf{P} \in U(a, b)} \langle \mathbf{P}, \mathbf{C}\rangle + \varepsilon H(\mathbf{P} | \hat{\alpha} \otimes \hat{\beta}),
\end{equation}
where $H(\mathbf{P} | \hat{\alpha} \otimes \hat{\beta}) := -\sum_{i=1}^m\sum_{j=1}^m p_{ij} \log \frac{p_{ij}}{a_i b_j}$.

\subsection{Wasserstein Barycenter and D2-Clustering}\label{sec_WB}
Discrete distribution (D2) clustering was first proposed by \citet{Li_Wang_2007} for automatic picture annotation. D2 clustering adopts the same spirit as K-means clustering. Instead of vectors, D2 clustering aims at finding clusters for distributions. It computes some centroids to summarize each cluster of data. The centroids are known as Wasserstein barycenter (WB) which themselves are also discrete distributions. For a set of discrete distributions $\{\nu_{i}, i = 1, \dots, N\}$, the WB is the solution of the following problem
\begin{equation}
    \min_{\mu} \sum_{i=1}^N W^2(\mu, \nu_i).
\end{equation}
The goal of D2 clustering is to find a set of such centroids $\{\mu_{k}, k = 1, \dots, K\}$ such that the total within-cluster variation, which is measured by Wasserstein distance, is minimized:
\begin{equation}
    \min_{\{\mu_k\}} \sum_{k=1}^K \sum_{i \in C_k} W(\mu_k, \nu_i)
\end{equation}
where $C_k$ is the set of indices of distributions assigned to cluster $k$ and $\mu_k$ is the WB of those distributions.

\section{Discrete Distribution Spectral Clustering}

\subsection{Motivation and Problem Formulation}
As mentioned in Section \ref{sec_WB}, existing methods use Wasserstein barycenters to cluster discrete distributions similarly to K-means clustering. These methods assume that members in each cluster can be well-determined by the Wasserstein distances to the barycenters, making them distance-based methods. However, in real applications, distributions may not concentrate around central points but instead form irregular shapes or lie on multiple manifolds. In such cases, connectivity-based methods like spectral clustering are more effective. Figure \ref{fig:toy_d2c} shows an intuitive comparison between distance-based and connectivity-based clustering of distributions, with ellipses and rectangles representing two different types of distributions. 
\begin{figure}[t]
\centering
\includegraphics[width=1\linewidth]{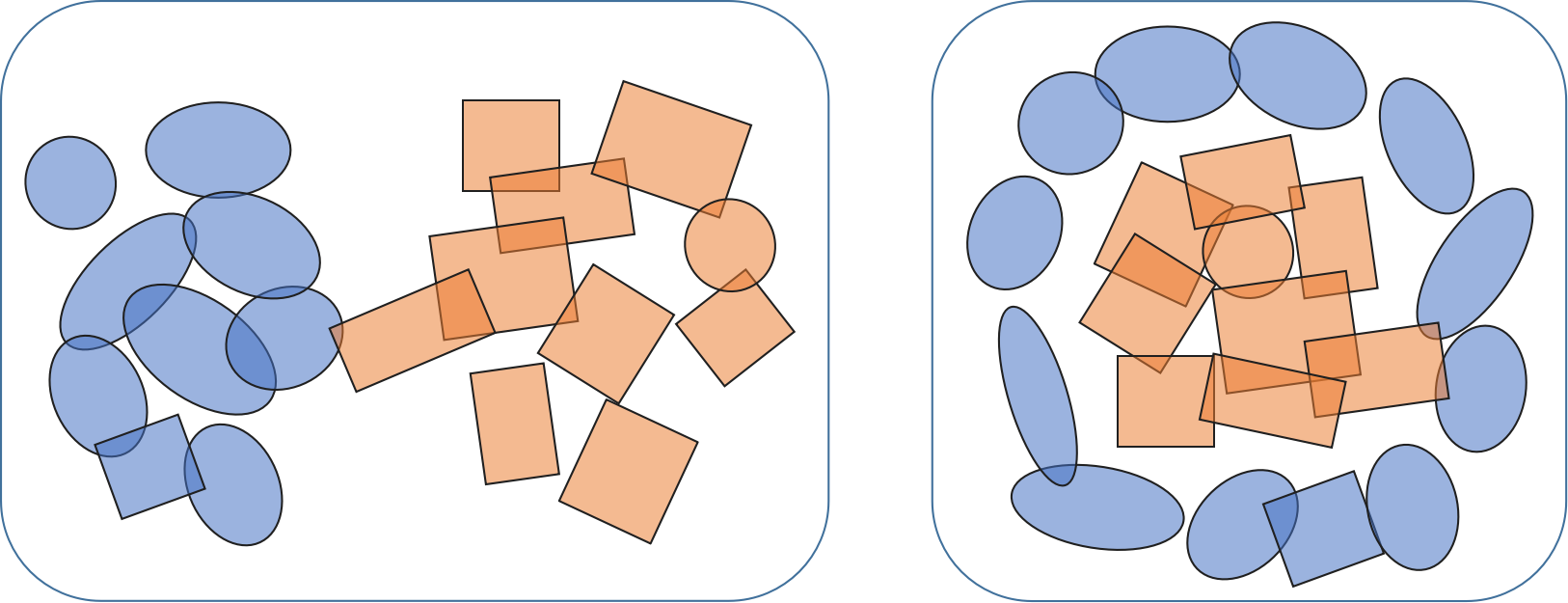}
\caption{An intuitive comparison between distance-based (left plot) clustering and connectivity-based (right plot) clustering of distributions. Ellipses and rectangles denote distributions. The two clusters are marked in different colors.}
\label{fig:toy_d2c}
\end{figure}

We first make the following formal definitions of what we mean by connectivity-based distribution clustering:
\begin{definition}[Connectivity-based distribution clustering]\label{def_cbdc}
    Given $N$ distributions $\mu_1,\mu_2,\ldots,\mu_N$ in that can be organized into $K$ groups $C_1,C_2,\ldots,C_K$. Let $\text{dist}(\cdot,\cdot)$ be a distance metric between two distributions and use it to construct a $\tau$ nearest neighbor ($\tau$-NN) graph $G=(V,E)$ over the $N$ distributions, on which each node corresponds a distribution. We assume that the following properties hold for the graph: 1) $G$ has $K$ connected components;
    2) $(i,j)\notin E$ if $\mu_i$ and $\mu_j$ are not in the same group. 
\end{definition}
The two properties in the definition are sufficient to guarantee that partitioning the graph into $K$ groups yields correct clustering for the $N$ distributions. Note that the definition is actually a special case of connectivity-based clustering because there are many other approaches or principles for constructing a graph. In addition, it is possible that a correct clustering can be obtained even when some $(i,j)\in E$ but $\mu_i$ and $\mu_j$ are not in the same group. In this work, we aim to solve the following problem.
\begin{definition}[Connectivity-based discrete distribution clustering]\label{def_cbdc_1}
  Based on Definition \ref{def_cbdc}, suppose $\mathbf{X}_i=[\mathbf{x}_1,\mathbf{x}_2,\ldots,\mathbf{x}_{m_i}]^\top \in \mathbb{R}^{m_i\times d}$ are independently drawn from $\mu_i$, $i=1,2,\ldots,N$. The goal is to partition $\mathbf{X}_1,\mathbf{X}_2,\ldots,\mathbf{X}_N$ into $K$ groups corresponding to $C_1,C_2,\ldots C_K$ respectively.
\end{definition}

\subsection{Algorithm}
To solve the problem defined in Definition \ref{def_cbdc_1}, we proposed to use MMD (Section \ref{sec_MMD}), Wasserstein distance (Section \ref{sec_WD}), and Sinkhorn divergence (Section \ref{sec_Sinkhorn}) to construct a distance matrix $\mathbf{D}$ between the $N$ distributions: 
\begin{equation}
 D_{ij}=\text{dist}(\mathbf{X}_i,\mathbf{X}_j),\quad (i,j)\in[N]\times[N].   
\end{equation}
 The distance matrix $\mathbf{D}$ is then converted to an adjacency matrix $\mathbf{A}$ 
\begin{equation}\label{gaussian_kernel}
    A_{ij}=\exp\left(-\gamma D_{ij}^2\right), \quad (i,j)\in[N]\times[N],
\end{equation}
where $\gamma>0$ is a hyper-parameter. This operation is similar to using a Gaussian kernel to compute the similarity between two vectors in Euclidean space. The matrix $\mathbf{A}$ is generally a dense matrix although some elements could be close to zero when $\gamma$ is relatively large. We hope that $A_{ij}=0$ if $\mathbf{X}_i$ and $\mathbf{X}_j$ are not in the same cluster. Therefore, we propose to sparsify $\mathbf{A}$, i.e.,
\begin{equation}
    \mathbf{A}\leftarrow \text{sparse}_{\tau}(\mathbf{A})
\end{equation}
where we keep only the largest $\tau$ elements of each column of $\mathbf{A}$, replace other elements with zeros, and symmetrize the matrix as $\mathbf{A} \leftarrow (\mathbf{A}+\mathbf{A}^\top)/2$. Finally, we perform the normalized cut \citep{shi2000normalized,ng2002spectral} on $\mathbf{A}$ to obtain $K$ cluster. The steps are detailed in Algorithm \ref{alg.sc}. We call the algorithm DDSC$_{\text{MMD}}$, DDSC$_{\text{Sinkhorn}}$, or DDSC$_{\text{W}}$ when MMD, Sinkhorn, or 2-Wasserstein distance is used as $\text{dist}(\cdot,\cdot)$ respectively.

\begin{algorithm}[ht] 
\caption{Discrete Distributions Spectral Clustering (DDSC)} \label{alg.sc}
\begin{algorithmic}[1]
\REQUIRE $\{\mathbf{X}_1,\mathbf{X}_2,\ldots,\mathbf{X}_N\}$, $\text{dist}(\cdot,\cdot)$ (e.g., MMD and Sinkhorn), $K$, $\tau$
\STATE Initialization: $\mathbf{D}=\mathbf{0}_{N \times N}$
\FOR{$i=1,2,\ldots,N$}
	\FOR{$j=i+1,i+2,\ldots,N$}
          \STATE $D_{ij}=d(\mathbf{X}_i,\mathbf{X}_j)$
	\ENDFOR
\ENDFOR
\STATE $\mathbf{D}=\mathbf{D}+\mathbf{D}^\top$
\STATE $\mathbf{A}=\left[\exp(-\gamma D_{ij}^2)\right]_{N\times N}-\mathbf{I}_N$
\STATE $\mathbf{A}\leftarrow \text{sparse}_{\tau}(\mathbf{A})$ 
\STATE $\mathbf{S}=\text{diag}(\sum_{i}A_{i1},\sum_{i}A_{i2},\ldots,\sum_{i}A_{iN})$
\STATE $\mathbf{L}=\mathbf{I}_N-\mathbf{S}^{-1/2}\mathbf{A}\mathbf{S}^{-1/2}$
\STATE Eigenvalue decomposition: $\mathbf{L}=\mathbf{V}\mathbf{\Lambda}\mathbf{V}^\top$, where $\lambda_1\leq\lambda_2\cdots\leq \lambda_N$
\STATE $\mathbf{V}_K=[\mathbf{v}_1,\mathbf{v}_2,\ldots,\mathbf{v}_K]$
\STATE Normalize the rows of $\mathbf{V}_K$ to have unit $\ell_2$ norm.
\STATE Perform $K$-means on $\mathbf{V}_K$.
\ENSURE $K$ clusters: $C_1,\ldots,C_K$.
\end{algorithmic}
\end{algorithm}
\subsection{Linear Optimal Transport}
Algorithm~\ref{alg.sc} proposed above requires the pairwise calculation of optimal transport distance. For a dataset with $N$ distributions, ${N(N-1)}/{2}$ calculations are required. It is computationally expensive, especially for large datasets. In what follows, we apply the linear optimal transportation (LOT) framework \citep{wang2013linear} to reduce the calculation of optimal transport distances from quadratic complexity to linear complexity. \citet{wang2013linear} described a framework for constructing embeddings of probability measures such that the Euclidean distance between the embeddings approximates 2-Wasserstein distance. We leverage the prior works (\citet{wang2013linear, kolouri2016continuous}) with some modifications and introduce how we use the framework to make our clustering algorithm more efficient. Here we mainly focus on the linear Wasserstein embedding for continuous measures, but all derivations hold for discrete measures as well.

Let $\mu_0$ be a reference probability measure (or template) with density $p_0$, s.t. $\mathrm{~d}\mu_0(x) = p_0(x) \mathrm{~d}x$. Let $f_i$ be the Monge optimal transport map between $\mu_0$ and $\mu_i$, we can define a mapping $\phi(\mu_i) := (f_i -id)\sqrt{p_0}$, where $id(x)=x$ is the identity function. The mapping $\phi(\cdot)$ has the following properties:
\begin{itemize}
    \item $\phi(\mu_0)=0$;
    \item $\|\phi(\mu_i) - \phi(\mu_0)\| = \|\phi(\mu_i)\| = W_2(\mu_i, \mu_0)$, i.e., the mapping preserves distance to template $\mu_0$;
    \item $\|\phi(\mu_i) - \phi(\mu_j)\| \approx W_2(\mu_i, \mu_j)$, i.e., the Euclidean distance between $\phi(\mu_i), \phi(\mu_j)$ is an approximation of $W_2(\mu_i, \mu_j)$.
\end{itemize}

In practice, for discrete distributions, we use Monge coupling $\mathbf{f}_i$ instead of Monge map $f_i$. The Monge coupling could be approximated from the Kantorovich plan. Specifically, for discrete distributions $\{\mathbf{X}_1,\mathbf{X}_2,\ldots,\mathbf{X}_N\}$ where $\mathbf{X}_i\in\mathbb{R}^{m_i\times d}$, reference distribution $\mathbf{X}_0 \in \mathbb{R}^{m_0 \times d}$ where $m_0 = \frac{1}{N}\sum_{i=1}^Nm_i$, with Kantorovich transport plans $[\mathbf{g}_1, \mathbf{g}_2, \ldots, \mathbf{g}_N]$ where $\mathbf{g}_i \in \mathbb{R}^{m_i \times d}$ is the transport plan between $\mathbf{X}_0$ and $\mathbf{X}_i$, the Monge coupling is approximated as
\begin{equation}
    \mathbf{f}_i = m_0(\mathbf{g}_i\mathbf{X}_i) \in \mathbb{R}^{m \times d}.
\end{equation}
The embedding mapping $\phi(\mathbf{X}_i) = (\mathbf{f}_i - \mathbf{X}_0) / \sqrt{m_0}$. We now formally present Algorithm~\ref{alg.lotsc} for discrete distribution clustering which leverages the LOT framework.
\begin{algorithm}[ht]
\caption{DDSC with LOT} \label{alg.lotsc}
\begin{algorithmic}[1]
\REQUIRE $\{\mathbf{X}_1,\ldots,\mathbf{X}_N\}$,  $\text{dist}(\cdot,\cdot)$ (e.g., MMD or Sinkhorn), $\mathbf{X}_i\in\mathbb{R}^{m_i\times d}$, $K$, $\tau$
\STATE Initialization: $\mathbf{D}=\mathbf{0}_{N \times N}$
\STATE Initialization: $m_0 = \frac{1}{N}\sum_{i=1}^Nm_i$, $\mathbf{X}_0 \in \mathbb{R}^{m_0 \times d}$
\STATE Initialization: $\mathbf{g} = [\mathbf{g}_1, \mathbf{g}_2, \ldots, \mathbf{g}_N], \mathbf{g}_i \in \mathbb{R}^{m_0 \times m_i}$ (list of Kantorovich plans)
\FOR {$k=1,2,\ldots,N$}
    \STATE $\mathbf{g}_k \leftarrow \text{Transport plan for } W_2(\mathbf{X}_0, \mathbf{X}_i)$
\ENDFOR
\FOR{$i=1,2,\ldots,N$}
    \STATE $\mathbf{Z}_i = \phi(\mathbf{X}_i)$
    \FOR{$j=i+1,i+2,\ldots,N$}
          \STATE $\mathbf{Z}_j = \phi(\mathbf{X}_j)$
          \STATE $D_{ij} = \|\mathbf{Z}_i - \mathbf{Z}_j\|_F$
    \ENDFOR
\ENDFOR
\STATE Follow lines 7-15 of Algorithm \ref{alg.sc}.
\ENSURE $K$ clusters: $C_1,\ldots,C_K$.
\end{algorithmic}
\end{algorithm}

We can see that for $N$ distributions $\{\mu_i\}_{i=1}^N$, we only need to solve $N$ optimal transport problems to embed the original distributions, and ${N(N-1)}/{2}$ Frobenius norms to get the distance matrix. For the initialization of $\mathbf{X}_0$, we use the normal distribution for simplicity. Indeed, it is shown (numerically) by \citep{kolouri2020wasserstein} that the choice of reference distribution is insignificant. Algorithm \ref{alg.lotsc} is called DDSC$_{\text{LOT}}$.

\section{Theoretical Guarantees}
As mentioned in Section~\ref{sec_pre}, we calculate discrete approximations of distances using finite samples in practice, leading to errors in the distance matrix $\mathbf{D}$ due to sampling. Therefore, it is meaningful and necessary to provide a theoretical guarantee for \textit{consistent and correct} clustering, which, to our knowledge, has not been addressed in previous discrete distribution clustering works.

Our discussion will primarily focus on Sinkhorn divergence, the most general case. Similar results for Wasserstein distance ($\varepsilon \to 0$) and MMD ($\varepsilon \to \infty$) can be derived trivially. For simplicity, we assume the distributions have the same number of support points, i.e., $m_i = m$ for all $i = 1, \dots, N$.

Before our analysis, we first introduce an important Lemma. As mentioned in \citep{genevay2016stochastic}, the Sinkhorn divergence defined in \eqref{def:Sinkhorn} has a dual formulation:
\begin{equation}
\begin{aligned}
        W_\varepsilon(\alpha, \beta) & = \max_{u,v\in\mathcal{C}(\Omega^2)} \int_\Omega u(x)\mathrm{~d}\alpha(x) + \int_\Omega v(y)\mathrm{~d}\beta(x) \\
        & \quad\quad- \varepsilon\int_{\Omega^2}\exp(\frac{u(x)+v(y)-c(x, y)}{\varepsilon})\mathrm{~d}\alpha(x)\mathrm{~d}\beta(y) + \varepsilon\\ 
        & = \mathbb{E}_{\alpha \otimes \beta}f^{XY}_\varepsilon(u, v) + \varepsilon
\end{aligned}
\end{equation}
where $X, Y$ are independent random variables distributed as $\alpha$ and $\beta$, respectively. $\mathcal{C}(\cdot)$ denotes the space of continuous function and $(u, v) \in \mathcal{C}(\Omega^2)$ are known as Kantorovich dual potentials. Here, $f^{XY}_\varepsilon(u, v) = u(x)+v(y)-\varepsilon \exp(\frac{u(x)+v(y)-c(x, y)}{\varepsilon})$ is a function of $u, v, \varepsilon$. 

\citet{genevay2019sample} showed that all the dual potentials $(u, v)$ are bounded in the Sobolev space $\mathbf{H}^s(\mathbb{R}^d)$ by some $\eta$ associated with $\varepsilon$. $f_\varepsilon$ is $B$-Lipschitz in $(u, v)$ on $\left\{(u, v) | u\oplus v \leq \kappa = 2L|\Omega| + \|c\|_\infty\right\}$. Let $\psi = \max_{x\in\Omega}k(x, x)$ where $k$ is the kernel associated with $\mathbf{H}^s(\mathbb{R}^d)$, we have the following  b\textbf{}lemma from \citep{genevay2019sample}
\begin{lemma}[Error bound of sampling-based Sinkhorn divergence]
\label{lemma:sinkhorn_err}
    Assume $D_{ij}$ is the $ij$-th entry of $\mathbf{D}$, then with probability at least $1 - \theta$
    \begin{equation}
        |D_{ij} - \hat{D}_{ij}| \leq \rho + E\sqrt{\log{\frac{1}{\theta}}},
    \end{equation}
    where $\kappa = 2L|\Omega| + \|c\|_\infty$, $\rho = 6B\frac{\eta\psi}{\sqrt{m}}$, and $E = \sqrt{\frac{2}{m}}\left(\kappa + \varepsilon\exp\left(\frac{\kappa}{\varepsilon}\right)\right)$.
\end{lemma}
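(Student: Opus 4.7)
The plan is to follow the standard concentration-plus-complexity pipeline, applied to the dual formulation of the Sinkhorn divergence stated just above the lemma. Write $W_\varepsilon(\alpha,\beta)=\varepsilon+\sup_{(u,v)\in\mathcal{F}_\eta}\mathbb{E}_{\alpha\otimes\beta}f^{XY}_\varepsilon(u,v)$ and $\hat{W}_\varepsilon(\hat{\alpha},\hat{\beta})=\varepsilon+\sup_{(u,v)\in\mathcal{F}_\eta}\frac{1}{m^2}\sum_{i,j}f^{XY}_\varepsilon(u,v)(X_i,Y_j)$, where $\mathcal{F}_\eta$ is the ball in the Sobolev space $\mathbf{H}^s(\mathbb{R}^d)$ of radius $\eta$ that contains all admissible pairs of Kantorovich potentials. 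Using the elementary inequality $|\sup_\mathcal{F}A-\sup_\mathcal{F}B|\le\sup_\mathcal{F}|A-B|$, the quantity to control becomes the uniform deviation $\Delta_m:=\sup_{(u,v)\in\mathcal{F}_\eta}\bigl|\mathbb{E}f^{XY}_\varepsilon(u,v)-\tfrac{1}{m^2}\sum_{i,j}f^{XY}_\varepsilon(u,v)(X_i,Y_j)\bigr|$, so that $|D_{ij}-\hat{D}_{ij}|\le\Delta_m$.

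The next step is to split $\Delta_m$ into its expected value and a concentration term via $\Delta_m\le\mathbb{E}\Delta_m+(\Delta_m-\mathbb{E}\Delta_m)$. For the expectation, I would invoke a symmetrization argument to bound $\mathbb{E}\Delta_m$ by (twice) the Rademacher complexity of the induced function class $\{(x,y)\mapsto f^{XY}_\varepsilon(u,v)(x,y):(u,v)\in\mathcal{F}_\eta\}$. Because $f_\varepsilon$ is $B$-Lipschitz in $(u,v)$ on the region $\{u\oplus v\le\kappa\}$ (as asserted before the lemma), a contraction-style inequality reduces this to the Rademacher complexity of $\mathcal{F}_\eta$ itself. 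That complexity is standardly bounded in terms of the RKHS trace, giving $\mathbb{E}\Delta_m\le 6B\eta\psi/\sqrt{m}=\rho$ using $\psi=\max_x k(x,x)$.

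For the deviation from its mean, I would apply McDiarmid's bounded differences inequality. The key observation is that $f^{XY}_\varepsilon(u,v)$ takes values in an interval of length at most $\kappa+\varepsilon\exp(\kappa/\varepsilon)$, since $|u\oplus v|\le\kappa$ and the exponential entropic term is similarly controlled; altering any single sample $X_i$ or $Y_j$ changes the empirical average by at most $2(\kappa+\varepsilon\exp(\kappa/\varepsilon))/m$. Summing the squared bounded-difference constants over the $2m$ coordinates and plugging into McDiarmid's inequality yields $\Pr\{\Delta_m-\mathbb{E}\Delta_m\ge t\}\le\exp\bigl(-mt^2/[2(\kappa+\varepsilon\exp(\kappa/\varepsilon))^2]\bigr)$. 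Setting the right-hand side to $\theta$ and solving gives the tail term $E\sqrt{\log(1/\theta)}$ with $E=\sqrt{2/m}\bigl(\kappa+\varepsilon\exp(\kappa/\varepsilon)\bigr)$. Combining the two pieces produces the claimed bound with probability at least $1-\theta$.

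The main obstacle is the Rademacher complexity step: controlling it requires the full strength of the fact that dual Sinkhorn potentials live in a Sobolev ball of finite radius $\eta$, a nontrivial regularity result from Genevay et al.\ 2019 that I would cite rather than reprove. The bounded-differences application and the symmetrization trick are essentially routine once the function class is tamed, and the explicit constants $\rho$ and $E$ fall out of tracking the Lipschitz constant $B$, the RKHS parameter $\psi$, and the uniform range $\kappa+\varepsilon\exp(\kappa/\varepsilon)$ through the two inequalities.
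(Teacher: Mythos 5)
The paper does not prove this lemma at all---it imports it verbatim from \citet{genevay2019sample}---and your sketch is a faithful reconstruction of that cited proof: dual formulation, the $|\sup-\sup|\le\sup|\cdot-\cdot|$ reduction, symmetrization plus Lipschitz contraction to the Rademacher complexity of the Sobolev ball (giving $\rho$), and McDiarmid for the $E\sqrt{\log(1/\theta)}$ tail. The only step you gloss over is that the double average $\tfrac{1}{m^2}\sum_{i,j}$ is a two-sample process, so the symmetrization must be done coordinate-by-coordinate (splitting the deviation through the intermediate measure $\hat{\alpha}\otimes\beta$), but this is routine and is exactly how the source handles it.
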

The lemma is the core ingredient of the clustering consistency analysis in Section~\ref{consistency}. To understand the Lemma, we can treat $\rho$ and $E$ as some constants determined by the property of cost function $c(\cdot)$, data space $\Omega$, the number of support points $m$, and the regularization parameter $\varepsilon$. It is obvious that we get a more accurate Sinkhorn divergence estimation with more support points.

\subsection{Consistency Analysis}\label{consistency}
As stated before, we use only finite samples from the underlying distributions to calculate distances (or divergences), which introduces a gap between the estimated and actual values. Consequently, the estimated matrix $\mathbf{\hat{D}}$ is used instead of the ground-truth distance matrix $\mathbf{D}$ for spectral clustering. This may lead to an inconsistency between the results of our algorithm and the ideal clustering results. Therefore, it is necessary to analyze and discuss the clustering consistency of our methods, specifically whether we can achieve the same clustering results using sample estimations instead of ground-truth distances.

Since matrix $\mathbf{\hat{D}}$ is the estimation of ground-truth $\mathbf{D}$, we can only calculate the estimated adjacency (similarity) matrix $\mathbf{\hat{A}} = \exp(-\gamma \mathbf{\hat{D}} \odot \mathbf{\hat{D}})- \mathbf{I_N}$ and the estimated Laplacian matrix $\mathbf{\hat{L}}$, leading to the estimated eigenvector matrix $\mathbf{\hat{V}}$. However in spectral clustering, the eigenvector matrix $\mathbf{V}$ is critical for clustering results because the first $K$ eigenvectors are used to perform $K$-means. We argue that, a smaller difference between $\mathbf{V}$ and estimated eigenvector matrix $\hat{\mathbf{V}}$ will generate more consistent clustering results. Therefore, we need to analyze how sampling affects $\mathbf{V}$.

Our analysis is based on matrix perturbation theory. In perturbation theory, distances between subspaces are usually measured using so called ``principal angles". Let $\mathcal{V}_1, \mathcal{V}_2$ be two subspaces of $\mathbb{R}^d$, and $\mathbf{V}_1, \mathbf{V}_2$ be two matrices such that their columns form orthonormal basis for $\mathcal{V}_1$ and $\mathcal{V}_2$, respectively. Then the cosines $\cos{\Theta_i}$ of the principal angels $\Theta_i$ are defined as the singular values of $\mathbf{V}_1^T\mathbf{V}_2$. The matrix $\sin{\Theta(\mathcal{V}_1, \mathcal{V}_2)}$ is the diagonal matrix with the sine values of the principal angles on the diagonal. The distance of the two subspaces is defined by
\begin{equation}
    d(\mathbf{V}_1, \mathbf{V}_2) = \| \sin{\Theta(\mathcal{V}_1, \mathcal{V}_2)} \|_F
\end{equation}
Based on the distance of subspaces, we give a formal definition of clustering consistency:
\begin{definition}[Consistency of Clustering]
\label{clustering_consistency}
    Let $\mathbf{L}$ be Laplacian matrix constructed by true distance matrix $\mathbf{D}$ and $\hat{\mathbf{L}}$ be Laplacian matrix constructed by estimated distance matrix $\hat{\mathbf{D}}$. Let $\mathbf{V} = [\mathbf{v}_1, \mathbf{v}_2, \dots, \mathbf{v}_K]$ and $\hat{\mathbf{V}} = [\hat{\mathbf{v}}_1, \hat{\mathbf{v}}_2, \dots, \hat{\mathbf{v}}_K] \in \mathbb{R}^{N \times K}$ be two matrices whose columns are eigenvectors corresponding to the $K$-smallest eigenvalues of $\mathbf{L}$ and $\hat{\mathbf{L}}$, respectively. Then the clustering results generated by $\mathbf{D}$ and $\hat{\mathbf{D}}$ are $\xi$-consistent if 
    \begin{equation}
        d(\mathbf{V}, \hat{\mathbf{V}}) \leq \xi.
    \end{equation}
\end{definition}

\begin{theorem}[Consistency of DDSC$_\text{Sinkhorn}$]
\label{thm:consistency}
Suppose there exists an interval $S_1 \subset \mathbb{R}$ such that it contains only the $K$-smallest eigenvalues of Laplacian matrices $\mathbf{L}$ and $\hat{\mathbf{L}}$. Let $\alpha$ be the smallest non-zero element in the similarity matrix $\mathbf{A} \in \mathbb{R}^{N \times N}$, and $\delta = |\lambda_{K+1} - \lambda_K|$ be the eigen gap of the Laplacian matrix $\mathbf{L}$. Then with probability at least $1 - \theta$, performing spectral clustering with ground truth distance matrix $\mathbf{D}$ and estimated distance matrix $\hat{\mathbf{D}}$ yields $\frac{2\zeta\sqrt{N}}{\delta(\alpha - \zeta)^2\sqrt{\tau}}$-consistent clustering results, 
where $\zeta = \sqrt{\gamma}\rho + E\sqrt{\gamma\log{\frac{2\tau^2N}{\theta}}}$, $E$ and $\rho$ is defined as before in Lemma~\ref{lemma:sinkhorn_err}, $\gamma$ is the gaussian kernel parameter, and $\tau$ is the sparse parameter.
\end{theorem}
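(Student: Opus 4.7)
The plan is to reduce the bound on $d(\mathbf V,\hat{\mathbf V})$ to a Frobenius-norm bound on the Laplacian perturbation $\hat{\mathbf L}-\mathbf L$ via the Davis--Kahan $\sin\Theta$ theorem, and then trace the sampling error through the chain: entries of $\mathbf D$, entries of $\mathbf A$, the degree matrix $\mathbf S$, and finally the symmetric normalization $\mathbf S^{-1/2}\mathbf A\mathbf S^{-1/2}$. Because $S_1$ isolates the $K$ smallest eigenvalues of both Laplacians with eigen-gap $\delta$, Davis--Kahan gives $d(\mathbf V,\hat{\mathbf V})\le\|\hat{\mathbf L}-\mathbf L\|_F/\delta$ (up to the constant appearing in the theorem), so the entire argument reduces to bounding $\|\hat{\mathbf L}-\mathbf L\|_F$ with high probability.

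For the entrywise step I would invoke Lemma~\ref{lemma:sinkhorn_err} once for each pair $(i,j)$ that can affect the sparsified graph and take a union bound over those $O(\tau^2 N)$ events with per-event failure probability $\theta/(2\tau^2 N)$. This yields $|\hat D_{ij}-D_{ij}|\le\rho+E\sqrt{\log(2\tau^2 N/\theta)}$ simultaneously for every relevant $(i,j)$ with probability at least $1-\theta$. The Gaussian kernel $x\mapsto\exp(-\gamma x^2)$ is $\sqrt{2\gamma/e}$-Lipschitz on $[0,\infty)$, and hence in particular $\sqrt\gamma$-Lipschitz, so this immediately promotes to the entrywise bound $|\hat A_{ij}-A_{ij}|\le\zeta$ for $\zeta=\sqrt\gamma\,\rho+E\sqrt{\gamma\log(2\tau^2 N/\theta)}$.

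For the Laplacian perturbation, write $\mathbf T:=\mathbf S^{-1/2}$ and $\hat{\mathbf T}:=\hat{\mathbf S}^{-1/2}$ and use the expansion
\[\hat{\mathbf T}\hat{\mathbf A}\hat{\mathbf T}-\mathbf T\mathbf A\mathbf T=\mathbf T(\hat{\mathbf A}-\mathbf A)\mathbf T+(\hat{\mathbf T}-\mathbf T)\hat{\mathbf A}\mathbf T+\hat{\mathbf T}\hat{\mathbf A}(\hat{\mathbf T}-\mathbf T).\]
Each column of $\mathbf A$ carries $\tau$ nonzeros, each at least $\alpha$, so $S_{ii}\ge\tau\alpha$ and $\hat S_{ii}\ge\tau(\alpha-\zeta)$; this gives $\|\mathbf T\|_2,\|\hat{\mathbf T}\|_2\le 1/\sqrt{\tau(\alpha-\zeta)}$ and, by the mean-value estimate for $x\mapsto x^{-1/2}$, $\|\hat{\mathbf T}-\mathbf T\|_2\le\zeta/(2\sqrt\tau(\alpha-\zeta)^{3/2})$. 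Combining with the entrywise bound gives $\|\hat{\mathbf A}-\mathbf A\|_F\le\zeta\sqrt{2\tau N}$, while $\|\hat{\mathbf A}\|_F\le\sqrt{2\tau N}$ since $A_{ij}\in[0,1]$. Applying these bounds term by term, the two ``degree-perturbation'' terms dominate and each is of order $\zeta\sqrt N/(\sqrt\tau(\alpha-\zeta)^2)$, while the ``adjacency-perturbation'' term has the strictly better scaling $\zeta\sqrt{N/\tau}/(\alpha-\zeta)$. Collecting the three contributions and feeding the resulting $\|\hat{\mathbf L}-\mathbf L\|_F$ bound into Davis--Kahan produces exactly $\tfrac{2\zeta\sqrt N}{\delta(\alpha-\zeta)^2\sqrt\tau}$-consistency.

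The main obstacle I foresee is the passage from the entrywise kernel bound to $\|\hat{\mathbf A}-\mathbf A\|_F$: the top-$\tau$ sparsification is not smooth in the distances, so in principle $\mathbf A$ and $\hat{\mathbf A}$ may have different sparsity patterns, and any mis-ranked entry contributes a full $A_{ij}$ (not just $\zeta$) to the Frobenius error. Handling this cleanly either requires the implicit assumption that the $\tau$-th and $(\tau{+}1)$-th largest entries of every column are separated by more than $2\zeta$, so the ranking is preserved, or a more careful argument that absorbs the mis-ranking into a slightly larger constant. The remainder is book-keeping; the tight $(\alpha-\zeta)^{-2}$ rate in particular arises by pairing the $(\alpha-\zeta)^{-3/2}$ behavior of $\|\hat{\mathbf T}-\mathbf T\|_2$ with one additional factor of $(\alpha-\zeta)^{-1/2}$ coming from $\|\mathbf T\|_2$.
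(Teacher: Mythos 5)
Your proposal follows essentially the same route as the paper: a union bound over the $O(\tau^2 N)$ relevant entries applied to the Sinkhorn sample-error lemma, the $\sqrt{\gamma}$-Lipschitz property of $x\mapsto e^{-\gamma x^2}$ to pass from $\hat{\mathbf{D}}$ to $\hat{\mathbf{A}}$, the same three-term telescoping of $\hat{\mathbf{S}}^{-1/2}\hat{\mathbf{A}}\hat{\mathbf{S}}^{-1/2}-\mathbf{S}^{-1/2}\mathbf{A}\mathbf{S}^{-1/2}$ (done entrywise in the paper, at the operator-norm level in your write-up), the degree bounds $s_i\ge\tau\alpha$, $\hat{s}_i\ge\tau(\alpha-\zeta)$ with the mean-value estimate for $x^{-1/2}$, and finally Davis--Kahan with the eigen-gap $\delta$. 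The one substantive caveat you raise --- that top-$\tau$ sparsification may select different supports for $\mathbf{A}$ and $\hat{\mathbf{A}}$, so a mis-ranked entry contributes a full $A_{ij}$ rather than $\zeta$ --- is a genuine issue, but it is equally unaddressed in the paper's own proof, which tacitly assumes the two sparsified matrices share the same $N\tau$-entry support.
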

Theorem~\ref{thm:consistency} reveals an important insights: sparsification benefits consistency. Focusing on the sparse parameter $\tau$, the clustering result is $O(\frac{1}{\sqrt{\tau\log\tau}})$-consistent. Therefore, smaller $\tau$ indicates more consistent clustering results. The proof of the theorem is detailed in \ref{app:proof_consistency}

\subsection{Correctness Analysis}\label{sec:correctness}
In the following analysis, we denote $\mu = \{\mu_1,\mu_2,\ldots,\mu_N\}$ as a set of $N$ distributions in $\mathbb{R}^d$ that can be organized into $K$ groups $C_1,C_2,\ldots,C_K$. To establish algorithms' correctness, we first present some definitions.
\begin{definition}[Intra-class neighbor set] Suppose $\mu_i \in C$ is a distribution in cluster $C$, $\tau\text{-NN}_\text{dist}(\mu_i)$ is the set of $\tau$-nearest neighbors of $\mu_i \in \mu$ with respect to some metric $\text{dist}(\cdot,\cdot)$. The inter-class neighbor set of  $\mu_i$ is
\begin{equation}
    \mathcal{N}^{intra}_i := \{\mu_j \in \mu | \mu_j \in C \text{ and } \mu_j \in \tau\text{-NN}_\text{dist}(\mu_i)\}
\end{equation}
\end{definition}
Similarly, we can define \emph{inter-class neighbor set}.
\begin{definition}[Inter-class neighbor set] Suppose $\mu_i \in \mathbb{R}^d$ is a distribution in cluster $C$, $\tau\text{-NN}_\text{dist}(\mu_i)$ is the set of $\tau$-nearest neighbors of $\mu_i \in \mu$ with respect to some metric $\text{dist}(\cdot,\cdot)$. The intra-class neighbor set of  $\mu_i$ is
\begin{equation}
    \mathcal{N}^{inter}_i := \{\mu_j \in \mu | \mu_j \notin C \text{ and } \mu_j \in \tau\text{-NN}_\text{dist}(\mu_i)\}
\end{equation}
\end{definition}

Based on the above definitions, the following definition is used to determine the correctness of clustering.
\begin{definition}[Correctness of Clustering]
\label{clustering_correctness}
Suppose $\mu_i \in \mu$ is a distribution, the clustering is correct with a tolerance of $\xi \geq 0$ if
\begin{enumerate}
    \item $\mathbf{A}_{ij} \geq \max_k(\mathbf{A}_{ik}^{inter}) + \xi$ for any $\mu_j \in \mathcal{N}^{intra}_i$
    \item $\mathbf{A}_{ij} \leq \min_k(\mathbf{A}_{ik}^{intra}) - \xi$ for any $\mu_j \in \mathcal{N}^{inter}_i$
\end{enumerate}
for arbitrary $i = 1, \dots, N$.
\end{definition}

Based on the Definition~\ref{clustering_correctness}, the following theorem gives the guarantee of correct clustering of DDSC$_{\text{Sinkhorn}}$.
\begin{theorem}[Correctness of DDSC$_{\text{Sinkhorn}}$]
\label{thm:correct}
Suppose the clustering is correct with a tolerance of $\xi$ and the Laplacian matrix $\hat{\mathbf{L}}$ has $K$ zero eigenvalues. Then with the probability of at least $1 - \theta$, performing DDSC$_{\text{Sinkhorn}}$ yields correct clustering results if
\begin{equation}
    \sqrt{\gamma}\rho + E\sqrt{\gamma\log{\frac{N\tau^2}{\theta}}} \leq \frac{1}{2}\xi
\end{equation}
where $E$ and $\rho$ is defined the same as Lemma~\ref{lemma:sinkhorn_err}.
\end{theorem}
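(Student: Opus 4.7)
The plan is to reduce correctness of the algorithm to the statement that the sparsified estimated adjacency matrix $\hat{\mathbf{A}}$ preserves the same neighborhood structure as the true $\mathbf{A}$. By the assumption that $\hat{\mathbf{L}}$ has exactly $K$ zero eigenvalues, the sparsified graph built from $\hat{\mathbf{A}}$ has $K$ connected components, so once we show that the kept entries of $\hat{\mathbf{A}}$ coincide (in ordering) with those of $\mathbf{A}$ on each row, spectral clustering with $\hat{\mathbf{A}}$ will produce the same clusters as with $\mathbf{A}$.

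First I would translate the distance-level error bound of Lemma~\ref{lemma:sinkhorn_err} into an entrywise bound on the similarity matrix. Since $A_{ij}=\exp(-\gamma D_{ij}^2)$, the map $t\mapsto \exp(-\gamma t^2)$ has derivative $-2\gamma t\exp(-\gamma t^2)$, whose modulus is globally maximized at $t=1/\sqrt{2\gamma}$ with value $\sqrt{2\gamma/e}$. Hence $|\hat{A}_{ij}-A_{ij}|\le C\sqrt{\gamma}\,|\hat{D}_{ij}-D_{ij}|$ for an absolute constant $C$. Applying Lemma~\ref{lemma:sinkhorn_err} with failure probability $\theta/M$ and a union bound over the $M\le N\tau^2$ pairs that can possibly enter the top-$\tau$ neighbourhood of any vertex in either $\mathbf{A}$ or $\hat{\mathbf{A}}$ yields, with probability at least $1-\theta$ and for all such pairs,
\begin{equation*}
    |\hat{A}_{ij}-A_{ij}|\;\le\; \sqrt{\gamma}\rho + E\sqrt{\gamma \log\tfrac{N\tau^2}{\theta}} \;\le\; \tfrac{1}{2}\xi,
\end{equation*}
where the last step is precisely the hypothesis of the theorem.

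Next I would use the tolerance condition of Definition~\ref{clustering_correctness} to deduce that sparsification is stable. For any $\mu_j\in\mathcal{N}^{intra}_i$ and any inter-class candidate $\mu_k$, Definition~\ref{clustering_correctness} gives $A_{ij}\ge A_{ik}+\xi$, and the entrywise bound above then yields $\hat{A}_{ij}\ge A_{ij}-\tfrac{\xi}{2}\ge A_{ik}+\tfrac{\xi}{2}\ge \hat{A}_{ik}$, with a symmetric argument for $\mu_j\in\mathcal{N}^{inter}_i$. Consequently, for every row $i$ the top-$\tau$ entries of $\hat{\mathbf{A}}$ are exactly the intra-class neighbours of $\mu_i$, so $\text{sparse}_\tau(\hat{\mathbf{A}})$ has the same zero/nonzero pattern as $\text{sparse}_\tau(\mathbf{A})$ at the block level and therefore the same $K$ connected components. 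Combined with the assumption that $\hat{\mathbf{L}}$ has $K$ zero eigenvalues, standard spectral clustering theory (the indicator vectors of the components span the zero-eigenspace of $\hat{\mathbf{L}}$) implies that Algorithm~\ref{alg.sc} recovers $C_1,\ldots,C_K$ correctly.

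The main obstacle I anticipate is the transition from the pointwise guarantee of Lemma~\ref{lemma:sinkhorn_err} to a simultaneous guarantee over enough pairs, while keeping the logarithmic factor of the form $\log(N\tau^2/\theta)$ rather than the cruder $\log(N^2/\theta)$; this requires carefully identifying that only the $\tau$ true neighbours of each vertex and their $\tau$ competing candidates need to be controlled, giving the $N\tau^2$ counting. The second delicate point is the Lipschitz constant for the Gaussian kernel, where we rely on the fact that the derivative of $\exp(-\gamma t^2)$ is bounded uniformly in $t$ by a multiple of $\sqrt{\gamma}$, which is precisely what produces the $\sqrt{\gamma}$ prefactor appearing in the theorem's condition.
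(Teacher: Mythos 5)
Your proposal is correct and follows essentially the same route as the paper's proof: convert the Sinkhorn estimation error into an entrywise similarity-matrix bound of order $\sqrt{\gamma}$ (the paper uses $|e^{-x^2}-e^{-y^2}|<|x-y|$ where you compute the Gaussian's Lipschitz constant $\sqrt{2\gamma/e}$, giving the same prefactor), then combine the tolerance-$\xi$ separation with a union bound over the $N\tau^2$ relevant pairs to conclude that $\zeta\le\tfrac{1}{2}\xi$ preserves the intra/inter ordering in $\hat{\mathbf{A}}$. Your explicit final step connecting the preserved sparsity pattern to the $K$ connected components and the zero eigenspace of $\hat{\mathbf{L}}$ is left implicit in the paper but is the intended conclusion.
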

It is easy to see from the above theorem that smaller $\tau$ is beneficial for correct clustering, which corresponds to the empirical observations. However, it should be noted that if $\tau$ is too small, the information loss will become too large for correct clustering. The proof of Theorem~\ref{thm:correct} can be found in \ref{app:proof_correct}.

\section{Numerical Results}
The proposed methods are evaluated on both synthetic and real datasets. In Section~\ref{exp:syn_data}, we use a synthetic dataset to illustrate the necessity of our DDSC model, demonstrating that other clustering methods fail while only our DDSC performs well. In Section~\ref{exp:real_data}, we perform clustering on three text datasets and two image datasets. The results show that our methods significantly outperform other baselines, further demonstrating the superiority of our approach. Both theoretical time complexity and empirical time cost are presented in Section~\ref{exp:time_cost}.

\subsection{Synthetic Dataset}\label{exp:syn_data}
The synthetic data consists of two types of distributions: one in a square shape and the other in a circular shape, as shown in Figure~\ref{toy_clustering}.

We compare our methods with K-means, spectral clustering (SC), and discrete distribution (D2) clustering \citep{ye2017fast}. The clustering performances of all methods are visualized in Figure~\ref{toy_clustering}, and the adjusted mutual information (AMI) scores \citep{vinh2009information} are reported in Table~\ref{toy-result}. The results show that only our methods can correctly cluster the discrete distributions.

\begin{figure}[ht]
\centering
\centerline{\includegraphics[width=1\linewidth]{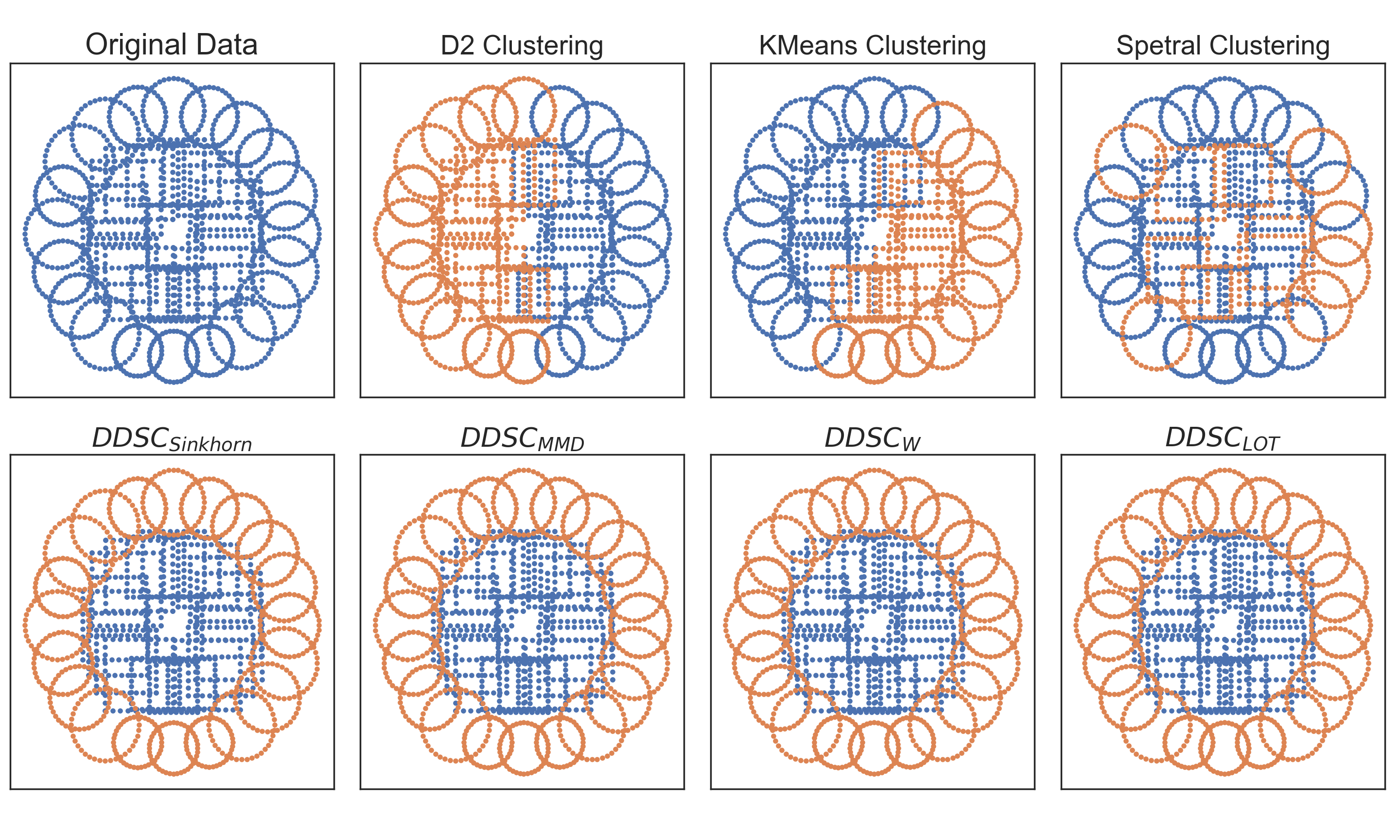}}
\caption{Visualization of synthetic dataset clustering. Different colors indicate different clustering labels produced by the corresponding algorithm. There are 20 square and circular shape distributions, respectively. Each distribution has 40 support points in $\mathbb{R}^2$ space.}
\label{toy_clustering}
\end{figure}


\begin{table}[ht]
\caption{AMI scores of synthetic dataset clustering.}
\label{toy-result}
\centering
\vskip 0.1in
\resizebox{\textwidth}{!}{
\begin{sc}
\begin{tabular}{cccccccc}
\toprule
\multirow{2}{4em}{\textbf{Method}} & \multicolumn{3}{c}{Baselines} & \multicolumn{4}{c}{Proposed} \\
\cmidrule(lr){2-4} \cmidrule(lr){5-8}
 & K-means & SC & D2 & \footnotesize{DDSC$_{\text{LOT}}$} &\footnotesize{DDSC$_{\text{MMD}}$} & \footnotesize{DDSC$_{\text{W}}$} & \footnotesize{DDSC$_{\text{Sinkhorn}}$} \\
 \midrule
 \textbf{AMI} & 0.0 & 0.0 & 0.0 & 1.0 & 1.0 & 1.0 & 1.0 \\
 \bottomrule
\end{tabular} 
\end{sc}
}
\end{table}

\subsection{Real Dataset}\label{exp:real_data}
In our experiments, we use three widely known text datasets to evaluate our methods. Each document is treated as a bag of words, a method extensively explored in previous works.
\begin{itemize}
    \item BBCnews abstract dataset: This dataset is created by concatenating the title and the first sentence of news posts from the BBCNews dataset\footnote[1]{\url{http://mlg.ucd.ie/datasets/bbc.html}}.
    \item BBCSports abstract dataset: Similarly constructed as the BBCNews abstract, this domain-specific dataset also comes from the BBC dataset\footnote[2]{\url{http://mlg.ucd.ie/datasets/bbc.html}}.
    \item Reuters subset: A 5-class subset of the Reuters dataset.
\end{itemize}
We also explore two image datasets to ensure completeness:
\begin{itemize}
    \item MNIST: A subset of 1000 images (100 images for each class) from the well-known MNIST dataset \cite{deng2012mnist}.
    \item Fashion-MNIST: A similar subset from the Fashion-MNIST dataset \cite{xiao2017fashionmnist}.
\end{itemize}

Table~\ref{tab:description_real_data} is a description of datasets.
\begin{table}[ht]
    \centering
    \caption{Description of real text dataset used in our experiments. $N$ is the number of distributions. $n$ is the number of samples in each distribution. $d$ is the dimension of data. $K$ is the number of clusters of the true labels.}
    \label{tab:description_real_data}
    \vskip 0.1in
    \begin{sc}
    \begin{tabular}{lcccc}
    \toprule
    Dataset & $N$ & $d$ & $n$ & $K$ \\
    \midrule
    BBCSports abstr. & 737 & 300 & 16 & 5 \\
    BBCNews abstr. & 2,225 & 300 & 16 & 5 \\
    Reuters & 1,209 & 300 & 16 & 5 \\
    MNIST & 1,000 & 2 & 784 & 10 \\
    Fashion-MNIST & 1,000 & 2 & 784 & 10 \\
    \bottomrule
    \end{tabular}
    \end{sc}
\end{table}

We compare our DDSC$_{\text{MMD}}$, DDSC$_{\text{Sinkhorn}}$, DDSC$_{\text{W}}$, and DDSC$_{\text{LOT}}$ methods with the following baselines: (1) \textit{Spectral Clustering (SC)}, (2) \textit{K-means Clustering (K-means)}, (3) \textit{D2 Clustering (D2)}, and (4) \textit{PD2 Clustering (PD2)}. D2 Clustering is proposed in \citet{ye2017fast}, as described in Section~\ref{sec_WB}. PD2 Clustering \citet{pmlr-v139-huang21f} projects the original support points into a lower dimension first and then applies D2 Clustering. We do not apply PD2 Clustering to image datasets because image data only have two dimensions.

For all methods, we tested the number of clusters $K\in\{5, 6, 7, 8, 9, 10\}$. For DDSC$_{\text{Sinkhorn}}$, we chose the Sinkhorn regularizer $\varepsilon$ between $1$ and $10$. Only the best results are reported.

The numerical experiment results of text datasets and image datasets are shown in Table~\ref{tab:result_text} and Table~\ref{tab:result_img}, respectively.  We also include the experiment results from \citet{pmlr-v139-huang21f}, which use the same text data and preprocessing as our experiments. Our methods significantly outperform barycenter-based clustering (e.g., D2 Clustering) and other baselines for both text and image datasets.

\begin{table}[ht]
    \centering
    \caption{Clustering results (average over five runs) of the real text datasets. * indicates the result is a reference from the work of \citet{pmlr-v139-huang21f}. The bold figures indicate the best 4 results.}
    \vskip 0.1in
    \resizebox{\textwidth}{!}{
    \begin{sc}
    \begin{tabular}{lcccccccc}
        \toprule
        \multirow{2}{3em}{\textbf{Methods}}& \multicolumn{2}{c}{BBC-Sports abstr.} & \multicolumn{2}{c}{BBCNew abstr.} & \multicolumn{2}{c}{Reuters Subsets} & \multicolumn{2}{c}{Average Score} \\
        \cmidrule(lr){2-3} \cmidrule(lr){4-5}\cmidrule(lr){6-7}\cmidrule(lr){8-9}
        & AMI & ARI & AMI & ARI  & AMI & ARI & AMI & ARI\\
        \midrule
        K-means & 0.3408 & 0.3213 & 0.5328 & 0.4950 & 0.4783 & 0.4287 & 0.4506 & 0.4150\\
        K-means$^*$ & 0.4276 & - & 0.3877 & - & 0.4627 & - & 0.4260 & -\\
        SC & 0.3646 & 0.2749 & 0.4891 & 0.4659 & 0.3955 & 0.3265 & 0.4164 & 0.3558\\
        D2 & 0.6234 & 0.4665 & 0.6111 & 0.5572 & 0.4244 & 0.3966 & 0.5530 & 0.4734\\
        D2$^*$ & 0.6510 & - & 0.6095 & - & 0.4200 & - & 0.5602 & -\\
        PD2 & 0.6300 & 0.4680 & 0.6822 & \textbf{0.6736} & 0.4958 & 0.3909 & 0.6027 & 0.5108\\
        PD2$^*$ & \textbf{0.6892} & - & 0.6557 & - & 0.4713 & - & 0.6054 & -\\
        \midrule
        DDSC$_{\text{MMD}}$ & 0.6724 & \textbf{0.5399} & \textbf{0.7108} & 0.6479 & \textbf{0.5803} & \textbf{0.5105} & \textbf{0.6545} & \textbf{0.5661} \\
        DDSC$_{\text{Sinkhorn}}$ & \textbf{0.7855} & \textbf{0.7514} & \textbf{0.7579} & \textbf{0.7642} & \textbf{0.6096} & \textbf{0.5457} & \textbf{0.7177} & \textbf{0.6871} \\
        DDSC$_{\text{W}}$ & \textbf{0.7755} & \textbf{0.7424} & \textbf{0.7549} & \textbf{0.7585} & \textbf{0.6096} & \textbf{0.5457} & \textbf{0.7133} & \textbf{0.6802} \\
        DDSC$_{\text{LOT}}$ & \textbf{0.7150} & \textbf{0.6712} & \textbf{0.7265} & \textbf{0.7499} & \textbf{0.5290} & \textbf{0.4325} & \textbf{0.6580} & \textbf{0.6129} \\
        \bottomrule
    \end{tabular}
    \end{sc}
    }
    \label{tab:result_text}
\end{table}

\begin{table}[ht]
    \centering
    \caption{Clustering results (average over five runs) of the image datasets (subsets).}
    \vskip 0.1in
    \begin{sc}
    \begin{tabular}{lcccccc}
    \toprule
    & \multicolumn{2}{c}{MNIST} & \multicolumn{2}{c}{Fashion-MNIST} \\
    \cmidrule(lr){2-3} \cmidrule(lr){4-5}
    Methods & AMI & ARI & AMI & ARI \\
    \midrule
    K-means & 0.5074 & 0.3779 & 0.5557 & 0.4137 \\
    SC     & 0.4497 & 0.3252 & 0.5135 & 0.3175 \\
    D2     & 0.3649 & 0.2172 & 0.5693 & 0.4174 \\
    \midrule
    DDSC$_{\text{MMD}}$      & \textbf{0.7755} & \textbf{0.6742} & \textbf{0.6553} & \textbf{0.4912} \\
    DDSC$_{\text{Sinkhorn}}$ & \textbf{0.6974} & \textbf{0.6150} & \textbf{0.6332} & 0.4346 \\
    DDSC$_{\text{W}}$        & \textbf{0.7073} & \textbf{0.6199} & 0.6292 & \textbf{0.4691} \\
    DDSC$_{\text{LOT}}$      & 0.6754 & 0.4992 & \textbf{0.6309} & \textbf{0.4469} \\ \bottomrule
    \end{tabular}
    \end{sc}
    \label{tab:result_img}
\end{table}

\subsection{Time Cost}\label{exp:time_cost}
In this section, we present both the time complexity and empirical running time of D2, DDSC$_{\text{MMD}}$, DDSC$_{\text{Sinkhorn}}$, DDSC$_{\text{LOT}}$, and DDSC$_{\text{W}}$. We exclude the running time of PD2 due to its highly time-consuming projection process. Suppose there are $N$ distributions where distribution has $m$ support points, and we try to generate $K$ clusters. The time complexity of different methods in Table~\ref{complexity}.

\begin{table}[ht]
    \caption{Time complexity of different methods. Assume the number of iterations for Sinkhorn divergence, barycenter is $T_1$, $T_2$ respectively.}
    \vskip 0.1in
    \centering
    \begin{sc}
    \begin{tabular}{lc}
        \toprule
        \textbf{Model} & \textbf{Time Complexity} \\
        \midrule
        D2 & $O(T_2Nm^2 + NKm^3\log{m})$ \\
        \midrule
        DDSC$_{\text{MMD}}$ & $O(N^2m^2+N^3)$ \\
        DDSC$_{\text{Sinkhorn}}$ & $O(T_1N^2m^2+N^3)$ \\
        DDSC$_{\text{W}}$ & $O(N^2m^3\log{m} + N^3)$ \\
        DDSC$_{\text{LOT}}$ & $O(Nm^3\log{m} + N^3)$ \\
        \bottomrule
    \end{tabular}
    \end{sc}
    \label{complexity}
\end{table}
We also report the average empirical running time of five experiments in Table~\ref{time}.

\begin{table}[ht]
    \caption{Running time (seconds) of different methods.}
    \centering
    \resizebox{\textwidth}{!}{
    \begin{sc}
    \begin{tabular}{lccccc}
         \toprule
         & BBC-Sports abstr. & BBCNew abstr. & Reuters& MNIST & Fashion-MNIST\\
         \midrule
         D2                       & 99.3 & 640.2  & 305.7 & 910.5 & 2250.0 \\
         \midrule
         DDSC$_{\text{MMD}}$     & \textbf{24.5}  & \textbf{200.8}  & \textbf{80.4}  & \textbf{376.1} & \textbf{364.0}  \\
         DDSC$_{\text{Sinkhorn}}$ & 86.0  & 1252.7 & 755.9 & 845.9 & 3075.4\\
         DDSC$_{\text{W}}$       & 44.9  & 427.7  & 139.9 & 1545.5& 10595.1\\
         DDSC$_{\text{LOT}}$     & \textbf{12.1}  & \textbf{73.8}   & \textbf{38.2}  & \textbf{251.2} & \textbf{251.0}\\
         \bottomrule
    \end{tabular}
    \end{sc}
    }
    \label{time}
\end{table}      

In terms of running time, DDSC$_{\text{MMD}}$ outperforms D2 Clustering because MMD distance has a closed-form solution, whereas solving the Wasserstein barycenter in D2 Clustering is iterative. DDSC$_{\text{W}}$ requires less time than DDSC$_{\text{Sinkhorn}}$ for text datasets, but the opposite is true for image datasets. This is due to the relatively small number of support points ($m$) in text datasets, where the advantage of Sinkhorn divergence over Wasserstein distance is less apparent because it may need many iterations to converge. For data with a large number of support points (large $m$), like images, calculating Wasserstein distances is very time-consuming, making DDSC$_{\text{Sinkhorn}}$ more suitable. DDSC$_{\text{LOT}}$ performs best in terms of running time overall.

\subsection{Incomplete Distance Matrix Clustering}
A natural idea to improve calculation speed is to calculate only part of the distance matrix $\mathbf{D}$ instead of the full matrix. We present the clustering results of DDSC$_{\text{MMD}}$ on the MNIST dataset using an incomplete distance matrix in Table~\ref{tab:random}. Only a certain percentage of distances are calculated, with random selection determining which distances to compute.
\begin{table}[ht]
    \caption{Calculation percentage and clustering performance of DDSC$_{\text{MMD}}$ on MNIST dataset.}
    \vskip 0.1in
    \centering
    \begin{tabular}{lcc}
         \toprule
         & AMI & ARI \\
         \midrule
         90\%  & 0.776    & 0.588 \\
         80\%  & 0.713    & 0.448  \\
         70\%  & 0.778    & 0.587 \\
         60\%  & 0.772    & 0.586\\
         50\%  & 0.770    & 0.581\\
         40\%  & 0.704    & 0.452\\
         30\%  & 0.765    & 0.579 \\
         20\%  & 0.656    & 0.392\\
         10\%  & 0.640    & 0.387 \\
         \bottomrule
    \end{tabular}
    \label{tab:random}
\end{table}  

Comparing Tabel \ref{tab:random} with Table \ref{tab:result_text}, we can see that DDSC$_{\text{MMD}}$ performs better than all baselines even if only 10\% entries of distance matrix are calculated. 

\section{Conclusion}\label{sec:conclusion}
This work proposes a general framework for discrete distribution clustering based on spectral clustering. Using this framework in practice has two key advantages: (1) distributions in real scenarios may not concentrate around centroids but instead may lie in irregular shapes or on multiple manifolds; (2) solving the barycenter problem requires careful initialization and has a high computational cost. Our framework addresses both of these issues. To improve scalability, we introduced a linear optimal transport-based method within this framework. We also provided theoretical analysis to ensure the consistency and correctness of clustering. Numerical results on synthetic datasets, as well as real text and image datasets, demonstrated that our methods significantly outperformed baseline methods in terms of both clustering accuracy and efficiency.

\appendix
\section{Proof for similarity matrix Error bound} \label{proof_similarity_err}
We first propose the following error bound of the similarity matrix. 
\begin{lemma}[Error bound of similarity matrix]
\label{similarity_err}
 With probability at least $1 - N\tau\theta$, the estimated similarity matrix $\mathbf{\hat{A}}$ satisfies
\begin{equation}
    \|\mathbf{A} - \mathbf{\hat{A}}\|_\infty < \sqrt{\gamma}\rho + E\sqrt{\gamma\log{\frac{1}{\theta}}},
\end{equation}
where $\rho$ and $E$ are defined as before in Lemma 4.1 and $\gamma$ was defined in Eq (11).
\end{lemma}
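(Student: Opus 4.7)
The plan is to reduce the claim to the per-entry Sinkhorn error bound of Lemma 4.1 by first converting a distance error into a similarity (kernel) error via Lipschitz continuity of $x \mapsto e^{-\gamma x^2}$, then applying a union bound over the entries that actually contribute to $\|\mathbf{A} - \hat{\mathbf{A}}\|_\infty$.

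First I would establish the Lipschitz estimate. Writing $f(x) = e^{-\gamma x^2}$ for $x \geq 0$, the derivative $f'(x) = -2\gamma x\, e^{-\gamma x^2}$ attains its maximal magnitude at $x_\ast = 1/\sqrt{2\gamma}$ with value $\sqrt{2\gamma/e} \leq \sqrt{\gamma}$. The mean value theorem then gives, for every $(i,j)$,
\begin{equation*}
|A_{ij} - \hat{A}_{ij}| = \bigl|e^{-\gamma D_{ij}^2} - e^{-\gamma \hat{D}_{ij}^2}\bigr| \leq \sqrt{\gamma}\,|D_{ij} - \hat{D}_{ij}|.
\end{equation*}
Plugging in the per-entry Sinkhorn bound of Lemma~4.1, on the event that $|D_{ij} - \hat{D}_{ij}| \leq \rho + E\sqrt{\log(1/\theta)}$ one immediately obtains
\begin{equation*}
|A_{ij} - \hat{A}_{ij}| \leq \sqrt{\gamma}\rho + E\sqrt{\gamma \log(1/\theta)},
\end{equation*}
which already has the form of the target bound entrywise.

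Second, I would apply a union bound across only those entries that matter for $\|\cdot\|_\infty$. After the sparsification step of Algorithm~\ref{alg.sc}, each of the $N$ columns of $\mathbf{A}$ retains only its $\tau$ largest entries, leaving at most $N\tau$ non-zero positions; entries zeroed out in both matrices contribute nothing to the max norm. A union bound over these $N\tau$ positions replaces the per-entry failure probability $\theta$ by $N\tau\theta$, producing the stated guarantee.

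The main delicate point I anticipate is a bookkeeping subtlety in the union-bound step: strictly speaking, the supports of $\mathbf{A}$ and $\hat{\mathbf{A}}$ after sparsification need not coincide, so one must either (i) union-bound the entrywise Lipschitz estimate over the \emph{union} of the two supports (still of size $O(N\tau)$, absorbing a harmless constant), or (ii) invoke the per-entry estimate \emph{before} sparsification at the $N\tau$ positions later retained and note that entrywise sparsification is non-expansive in the max norm. Either route yields the stated probability, and the rest of the argument is simply the substitution of Lemma~4.1 into the Lipschitz inequality.
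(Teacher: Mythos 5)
Your proposal follows essentially the same route as the paper's own proof: a Lipschitz bound for $x \mapsto e^{-\gamma x^{2}}$ (the paper invokes $|e^{-x^{2}}-e^{-y^{2}}|<|x-y|$ for $x,y>0$ and rescales by $\sqrt{\gamma}$, which is exactly your mean-value-theorem estimate), substitution of the per-entry bound of Lemma~\ref{lemma:sinkhorn_err}, and a union bound over the $N\tau$ entries retained after sparsification. Your closing remark about the supports of $\mathbf{A}$ and $\hat{\mathbf{A}}$ possibly differing after sparsification flags a subtlety that the paper's proof silently ignores, so your treatment is, if anything, slightly more careful.
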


\begin{proof}
To analyze $\|\mathbf{A} - \hat{\mathbf{A}}\|_\infty$, we first look at a single element in $\mathbf{A} - \hat{\mathbf{A}}$,
\begin{equation}
    |a_{ij} - \hat{a}_{ij}| = |\exp(-\gamma D_{ij}^2) - \exp(-\gamma \hat{D}_{ij}^2)|
\end{equation}
Since $|e^{-x^2} - e^{-y^2}| < |x - y|$ for $x, y > 0$,
\begin{equation}
    |a_{ij} - \hat{a}_{ij}| < \gamma^{1/2}|D_{ij} - \hat{D}_{ij}|
\end{equation}
Let $\zeta = \sqrt{\gamma}\rho + E\sqrt{\gamma\log{\frac{1}{\theta}}}$. According to Lemma 4.1, with probability at least $1 - \theta$
\begin{equation}\label{similarity_difference}
    |a_{ij} - \hat{a}_{ij}| < \zeta
\end{equation}
Since we only keep $N\tau$ elements in $\hat{\mathbf{A}}$, we use $N\tau$ random variables $X_i, i = 1, \dots, N\tau$ to denote the elements in $|\mathbf{A} - \hat{\mathbf{A}}|$. By Boole's inequality,
\begin{equation}\label{boole_ineq}
     \mathcal{P}\left(\bigcup_{i=1}^{N\tau}(X_i \geq \zeta)\right) \leq \sum_{i=1}^{N\tau}\mathcal{P}\left(X_i \geq \zeta\right)
\end{equation}
From equation(~\ref{similarity_difference}) we know that $\mathcal{P}(X_i < \zeta) \geq 1-\theta$, so $\mathcal{P}(X_i \geq \zeta) < \theta$. Combine with equation(~\ref{boole_ineq}), we have 
\begin{equation}
    \mathcal{P}\left(\bigcup_{i=1}^{N\tau}(X_i \geq \zeta)\right) < N\tau\theta
\end{equation}
We can derive the error bound as follows,
\begin{equation}
\begin{aligned}
    \mathcal{P}\left(\|\mathbf{A} - \hat{\mathbf{A}}\|_\infty < \zeta\right) &= \mathcal{P}\left(\max\{X_i\}_{i=1}^{N\tau} < \zeta\right) \\
    &= \mathcal{P}\left(X_1 < \zeta, \dots, X_{N\tau} < \zeta\right) \\
    &= 1 - \mathcal{P}\left(\bigcup_{i=1}^{N\tau}(X_i \geq \zeta)\right) \\
    &\geq 1 - N\tau\theta
\end{aligned}
\end{equation}
So the error bound for similarity is
\begin{equation}
    \|\mathbf{A} - \hat{\mathbf{A}}\|_\infty < \zeta
\end{equation}
with probability at least $1 - N\tau\theta$.
\end{proof}

\section{Proof for Laplacian matrix Error Bound}\label{proof_laplacian_err}
Let $\mathbf{L}$ and $\mathbf{\hat{L}}$ be the Laplacian matrices of $\mathbf{A}$ and $\mathbf{\hat{A}}$ respectively. We provide the estimation error bound for $\mathbf{\hat{L}}$ in the following lemma.
\begin{lemma}[Error bound of Laplacian matrix]
\label{laplacian_err}
    Suppose the minimum value of the adjacency matrix $\mathbf{A}$ is $\alpha$. Then with probability at least $1 - 2 \tau^2N\theta$, the following inequality holds $\mathbf{\hat{L}}$ satisfies
    \begin{equation}
        \|\mathbf{L} - \mathbf{\hat{L}}\|_\infty < \frac{2\zeta}{\tau(\alpha-\zeta)^2},
    \end{equation}
    where $\zeta = \sqrt{\gamma}\rho + E\sqrt{\gamma\log{\frac{1}{\theta}}}$.
\end{lemma}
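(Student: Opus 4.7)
The plan is to bound $\|\mathbf{L}-\hat{\mathbf{L}}\|_\infty$ entry by entry, reducing each off-diagonal entry to quantities controlled by Lemma~\ref{similarity_err}, and then taking a union bound over non-zero positions. Since both normalized Laplacians have $1$'s on the diagonal, diagonal entries of $\mathbf{L}-\hat{\mathbf{L}}$ vanish, so it suffices to bound, for $i\neq j$,
\begin{equation*}
L_{ij}-\hat L_{ij} \;=\; \frac{\hat A_{ij}}{\sqrt{\hat S_{ii}\hat S_{jj}}} - \frac{A_{ij}}{\sqrt{S_{ii}S_{jj}}}.
\end{equation*}
I would telescope this into a numerator-perturbation term and a degree-perturbation term,
\begin{equation*}
L_{ij}-\hat L_{ij} \;=\; \frac{\hat A_{ij}-A_{ij}}{\sqrt{\hat S_{ii}\hat S_{jj}}} \;+\; A_{ij}\left(\frac{1}{\sqrt{\hat S_{ii}\hat S_{jj}}}-\frac{1}{\sqrt{S_{ii}S_{jj}}}\right),
\end{equation*}
and control each separately.

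The degree control is the engine. Because sparsification keeps exactly $\tau$ non-zero entries per column of $\mathbf{A}$, each at least $\alpha$ and at most $1$, we have the deterministic bounds $\tau\alpha\le S_{ii}\le\tau$. On the event that all $\tau$ Lemma~\ref{similarity_err} events in column $i$ hold, the column sum deviates by at most $|S_{ii}-\hat S_{ii}|\le\tau\zeta$, and so $\hat S_{ii}\ge\tau(\alpha-\zeta)$ as well, and likewise for column $j$. For the first (numerator) piece I would then divide $|\hat A_{ij}-A_{ij}|<\zeta$ by $\sqrt{\hat S_{ii}\hat S_{jj}}\ge\tau(\alpha-\zeta)$ to get a contribution of order $\zeta/(\tau(\alpha-\zeta))$. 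For the second (degree) piece I would apply the identity $\tfrac{1}{\sqrt{x}}-\tfrac{1}{\sqrt{y}} = (y-x)/[\sqrt{xy}(\sqrt{x}+\sqrt{y})]$ with $x=\hat S_{ii}\hat S_{jj}$ and $y=S_{ii}S_{jj}$, combine it with the product-difference bound $|S_{ii}S_{jj}-\hat S_{ii}\hat S_{jj}|\le S_{jj}|S_{ii}-\hat S_{ii}|+\hat S_{ii}|S_{jj}-\hat S_{jj}|\le 2\tau^2\zeta$, and use $|A_{ij}|\le 1$. Adding the two pieces and collapsing the weaker one into the dominant one then produces the claimed bound $2\zeta/(\tau(\alpha-\zeta)^2)$.

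To convert the deterministic chain into the probability statement I would use Boole's inequality over the Lemma~\ref{similarity_err} events required. Controlling a single off-diagonal entry of $\mathbf{L}$ engages the $\tau$ events in column $i$ and the $\tau$ events in column $j$, which jointly cover both the entry-wise deviation $|A_{ij}-\hat A_{ij}|$ and the two column-sum deviations, for at most $2\tau$ events per non-zero entry. Ranging over the $N\tau$ non-zero positions of $\mathbf{L}$, the overcounted union bound comes to at most $2N\tau^2$ instances of Lemma~\ref{similarity_err}, each failing with probability at most $\theta$, so the success probability is at least $1-2N\tau^2\theta$.

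The main obstacle I expect is keeping the power of $\alpha-\zeta$ in the denominator at the stated value rather than a higher power: a blunt use of $|\sqrt{a}-\sqrt{b}|\le|a-b|/(\sqrt{a}+\sqrt{b})$ with the weakest lower bound applied to all four of $S_{ii},S_{jj},\hat S_{ii},\hat S_{jj}$ produces a spurious extra factor of $\alpha-\zeta$ in the denominator. Avoiding it requires pairing the stronger deterministic bound $S_{ii}\ge\tau\alpha$ against the perturbed bound $\hat S_{ii}\ge\tau(\alpha-\zeta)$ asymmetrically, and exploiting the uniform upper bound $S_{ii}\le\tau$ to cancel one factor of $\tau$ from the $2\tau^2\zeta$ numerator of the product-difference bound against a matching factor in the denominator.
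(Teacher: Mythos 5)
Your proposal is correct and follows essentially the same route as the paper's proof: an entrywise telescoping of $\mathbf{S}^{-1/2}\mathbf{A}\mathbf{S}^{-1/2}-\hat{\mathbf{S}}^{-1/2}\hat{\mathbf{A}}\hat{\mathbf{S}}^{-1/2}$ into a numerator-perturbation term and degree-perturbation terms, the same degree bounds $\tau\alpha\le S_{ii}\le\tau$, $|S_{ii}-\hat S_{ii}|\le\tau\zeta$, $\hat S_{ii}\ge\tau(\alpha-\zeta)$, the same final collapse of $\frac{\zeta}{\tau}\cdot\frac{1+(\alpha-\zeta)}{(\alpha-\zeta)^2}$ into $\frac{2\zeta}{\tau(\alpha-\zeta)^2}$ using $\alpha-\zeta\le 1$, and the same Boole's-inequality accounting yielding $1-2N\tau^2\theta$. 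The only cosmetic difference is that you control $\frac{1}{\sqrt{x}}-\frac{1}{\sqrt{y}}$ via the exact algebraic identity while the paper uses the Mean Value Theorem on $f(x)=1/\sqrt{x}$.
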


\begin{proof}
By definition, $\mathbf{L} = \mathbf{I} - \mathbf{S}^{-1/2}\mathbf{A}\mathbf{S}^{-1/2}$, $\hat{\mathbf{L}} = \mathbf{I} - \hat{\mathbf{S}}^{-1/2}\hat{\mathbf{A}}\hat{\mathbf{S}}^{-1/2}$. We have
\begin{equation}
    \|\mathbf{L} - \hat{\mathbf{L}}\|_\infty = \|\mathbf{S}^{-1/2}\mathbf{A}\mathbf{S}^{-1/2} - \hat{\mathbf{S}}^{-1/2}\hat{\mathbf{A}}\hat{\mathbf{S}}^{-1/2} \|_\infty
\end{equation}
For $ij$-th entry of matrix $\hat{\mathbf{L}} - \mathbf{L}$,
\begin{equation}
    (\mathbf{L} - \hat{\mathbf{L}})_{ij} = \frac{a_{ij}}{\sqrt{s_i}\sqrt{s_j}} - \frac{\hat{a}_{ij}}{\sqrt{\hat{s}_i}\sqrt{\hat{s}_j}}
\end{equation}
Consider a single element in $\hat{\mathbf{L}} - \mathbf{L}$,
\begin{equation}
\begin{aligned}
    \left| \frac{a_{ij}}{\sqrt{s_i}\sqrt{s_j}} - \frac{\hat{a}_{ij}}{\sqrt{\hat{s}_i}\sqrt{\hat{s}_j}}\right|
    = &\Big|\frac{a_{ij}}{\sqrt{s_i}\sqrt{s_j}} - \frac{a_{ij}}{\sqrt{\hat{s}_i}\sqrt{s_j}} + \frac{a_{ij}}{\sqrt{\hat{s}_i}\sqrt{s_j}} - \frac{\hat{a}_{ij}}{\sqrt{\hat{s}_i}\sqrt{s_j}}\\
    &+ \frac{\hat{a}_{ij}}{\sqrt{\hat{s}_i}\sqrt{s_j}} -\frac{\hat{a}_{ij}}{\sqrt{\hat{s}_i}\sqrt{\hat{s}_j}}\Big| \\
    \leq &\left|\frac{1}{\sqrt{s_i}} - \frac{1}{\sqrt{\hat{s}_i}}\right| \frac{a_{ij}}{\sqrt{s_j}} 
    + \left|a_{ij}-\hat{a}_{ij}\right|\frac{1}{\sqrt{\hat{s}_i}\sqrt{s_j}} 
    \\
    &+ \left|\frac{1}{\sqrt{s_j}} - \frac{1}{\sqrt{\hat{s}_j}}\right|\frac{a_{ij}}{\sqrt{\hat{s}_i}}
\end{aligned}
\end{equation}
Let $f(x) := \frac{1}{\sqrt{x}}$. By Mean Value Theorem,
\begin{equation}
    f(s_i) - f(\hat{s}_i) = \frac{df}{dx}(c)(s_i - \hat{s}_i)
\end{equation}
for some value $c$ between $s_i$ and $\hat{s}_i$. That is,
\begin{equation}
\begin{aligned}
    \left|\frac{1}{\sqrt{s_i}} - \frac{1}{\sqrt{\hat{s}_i}}\right| &\leq \frac{1}{2c^{\frac{3}{2}}}\left|s_i - \hat{s}_i\right|
\end{aligned}
\end{equation}
From Lemma~\ref{similarity_err} we know that $\left|a_{ij} - \hat{a}_{ij}\right| < \zeta$ with probability at least $(1 - \theta)$. Since we only keep $\tau$-largest elements in column of $\mathbf{A}$ and $\hat{\mathbf{A}}$ after sparsing the matrices, by Boole's inequality, $\left|s_{i} - \hat{s}_{i}\right| < \tau\zeta$ with probability at least $(1 - \tau \theta)$. The minimum value of $\mathbf{A}$ is $\alpha$,then by Lemma~\ref{similarity_err}, the minimum possible value of $\hat{\mathbf{A}}$ is $(\alpha - \zeta)$. \textbf{Notice} that in practice we can always assume $\alpha > \zeta$ because the minimum entry of $\hat{\mathbf{A}}$ will never be negative. Therefore, the minimum value of $s_i$ is $\tau\alpha$ and  the minimum value of $\hat{s}_i$ is $\tau(\alpha - \zeta)$. Since $c$ is a value between $s_i$ and $\hat{s}_i$, the minimum value of $c$ is $\tau(\alpha - \zeta)$. Then we have
\begin{equation}\label{eq_part_1}
\begin{aligned}
    \left|\frac{1}{\sqrt{s_i}} - \frac{1}{\sqrt{\hat{s}_i}}\right| \frac{a_{ij}}{\sqrt{s_j}} &< \frac{\tau\zeta}{2(\tau(\alpha-\zeta))^{\frac{3}{2}}}\frac{a_{ij}}{\sqrt{s_j}} \\
    &\leq \frac{\tau\zeta}{2(\tau(\alpha-\zeta))^{\frac{3}{2}}}\frac{1}{\sqrt{s_j}} \\
    &\leq \frac{\tau\zeta}{2(\tau(\alpha-\zeta))^{\frac{3}{2}}}\frac{1}{\sqrt{\tau\alpha}} \\
    &\leq \frac{\tau\zeta}{2(\tau(\alpha-\zeta))^{\frac{3}{2}}}\frac{1}{\sqrt{\tau(\alpha - \zeta)}} \\
    &= \frac{\zeta}{2\tau(\alpha - \zeta)^2}
\end{aligned}
\end{equation}
with probability at least $(1 - \tau \theta)$.
Also, it is easy to see that with probability at least $(1 - \tau \theta)$,
\begin{equation}\label{eq_part_2}
\begin{aligned}
    \left|a_{ij}-\hat{a}_{ij}\right|\frac{1}{\sqrt{\hat{s}_i}\sqrt{s_j}} 
    &< \frac{\zeta}{\sqrt{\hat{s}_i}\sqrt{s_j}} \\
    &\leq \frac{\zeta}{\tau\sqrt{\alpha - \zeta}\sqrt{\alpha}} \\
    &\leq \frac{\zeta}{\tau(\alpha - \zeta)}
\end{aligned}
\end{equation}
With similar steps as inequalities~\ref{eq_part_1}, we can derive that with probability at least $(1 - \tau \theta)$
\begin{equation}\label{eq_part_3}
    \left|\frac{1}{\sqrt{s_j}} - \frac{1}{\sqrt{\hat{s}_j}}\right|\frac{a_{ij}}{\sqrt{\hat{s}_i}}  < \frac{\zeta}{2\tau(\alpha - \zeta)^2}
\end{equation}
Combining equations~\ref{eq_part_1}, \ref{eq_part_2} and \ref{eq_part_3}, we have 
\begin{equation}
\begin{aligned}
    &\left| \frac{a_{ij}}{\sqrt{s_i}\sqrt{s_j}} - \frac{\hat{a}_{ij}}{\sqrt{\hat{s}_i}\sqrt{\hat{s}_j}}\right|\\
    < &\frac{\zeta}{2\tau(\alpha - \zeta)^2} + \frac{\zeta}{\tau(\alpha - \zeta)} + \frac{\zeta}{2\tau(\alpha - \zeta)^2}\\
    =& \frac{\zeta}{\tau}\cdot\frac{1+\alpha-\zeta}{(\alpha-\zeta)^2} \\
    \leq& \frac{2\zeta}{\tau(\alpha-\zeta)^2}
\end{aligned}
\end{equation}
with probability at least $(1 - 2\tau \theta)$. Using Boole's inequality and similar arguments in Appendix~\ref{proof_similarity_err}, we have
\begin{equation}
    \|\mathbf{L} - \hat{\mathbf{L}}\|_\infty \leq \frac{2\zeta}{\tau(\alpha-\zeta)^2}
\end{equation}
with probability at least $(1 - 2\tau^2N\theta)$.
\end{proof}

\section{Proof for Theorem~\ref{thm:consistency}}
\label{app:proof_consistency}
We use the famous Davis-Kahan theorem from matrix perturbation theory to find the bound. The following lemma bounds the difference between eigenspaces of symmetric matrices under perturbations.
\begin{lemma}[Davis-Kahan Theorem]\label{davis-kahan} Let $\mathbf{Z}, \mathbf{H} \in \mathbb{R}^{n \times n}$ be any symmetric matrices. Consider $\hat{\mathbf{Z}} := \mathbf{Z} + \mathbf{H}$ as a pertued version of $\mathbf{Z}$. Let $S_1 \subset \mathbb{R}$ be an interval. Denote by $\lambda_{S_1}(\mathbf{Z})$ the set of eigenvalues of $\mathbf{Z}$ which are contained in $S_1$, and by $\mathbf{V}_1$ the eigenspace corresponding to all those eigenvalues. Denote by $\lambda_{S_1}(\hat{\mathbf{Z}})$ and $\hat{\mathbf{V}}_1$ the analogous quantities for $\hat{\mathbf{Z}}$. Define the distance between $S_1$ and the spectrum of $\mathbf{Z}$ outside of $S_1$ as
$$
\delta = \min\{|\lambda-s|; \lambda \in eig(\mathbf{Z}), \lambda \notin S_1, s \in S_1\}.
$$
Then the distance between the two subspaces $\mathbf{V}_1$ and $\hat{\mathbf{V}}_1$ is bounded by
$$
d(\mathbf{V}_1, \hat{\mathbf{V}}_1) \leq \frac{\|\mathbf{H}\|_F}{\delta}.
$$
\end{lemma}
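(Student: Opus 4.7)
The plan is to prove this $\sin\Theta$ bound by the classical Sylvester-equation argument of Davis and Kahan. The target quantity $d(\mathbf{V}_1,\hat{\mathbf{V}}_1)=\|\sin\Theta(\mathbf{V}_1,\hat{\mathbf{V}}_1)\|_F$ will be reduced to the Frobenius norm of a single cross block of basis products, which in turn satisfies a linear matrix equation whose left- and right-hand spectra are separated by exactly $\delta$. I would first complete $\mathbf{V}_1$ to an orthonormal basis $[\mathbf{V}_1,\mathbf{V}_2]$ of $\mathbb{R}^n$ that diagonalizes $\mathbf{Z}$, with $\mathbf{V}_2$ spanning the eigenspace corresponding to those eigenvalues of $\mathbf{Z}$ lying \emph{outside} $S_1$; then $\mathbf{Z}\mathbf{V}_2=\mathbf{V}_2\Lambda_2$ for a diagonal $\Lambda_2$. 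For $\hat{\mathbf{Z}}$, write $\hat{\mathbf{Z}}\hat{\mathbf{V}}_1=\hat{\mathbf{V}}_1\hat{\Lambda}_1$ with $\hat{\Lambda}_1$ diagonal and its entries in $S_1$. Denote the cross block by $\mathbf{Q}:=\hat{\mathbf{V}}_1^T\mathbf{V}_2$.

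Next I would compute $\hat{\mathbf{V}}_1^T\hat{\mathbf{Z}}\mathbf{V}_2$ in two ways: the left eigen-relation gives $\hat{\Lambda}_1\mathbf{Q}$, while expanding $\hat{\mathbf{Z}}=\mathbf{Z}+\mathbf{H}$ and using the right eigen-relation gives $\mathbf{Q}\Lambda_2+\hat{\mathbf{V}}_1^T\mathbf{H}\mathbf{V}_2$. Equating them yields the Sylvester identity
$$\hat{\Lambda}_1\mathbf{Q}-\mathbf{Q}\Lambda_2=\hat{\mathbf{V}}_1^T\mathbf{H}\mathbf{V}_2.$$
Since $\hat{\Lambda}_1$ and $\Lambda_2$ are diagonal, this decouples entrywise as $(\hat{\lambda}_i-\lambda_j)\mathbf{Q}_{ij}=(\hat{\mathbf{V}}_1^T\mathbf{H}\mathbf{V}_2)_{ij}$. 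By assumption $\hat{\lambda}_i\in S_1$ while $\lambda_j$ is an eigenvalue of $\mathbf{Z}$ outside $S_1$, so the definition of $\delta$ forces $|\hat{\lambda}_i-\lambda_j|\geq\delta$ for every relevant $(i,j)$.

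Dividing entrywise, squaring, summing, and using that left/right multiplication by matrices with orthonormal columns does not increase the Frobenius norm, I obtain $\|\mathbf{Q}\|_F\leq\|\hat{\mathbf{V}}_1^T\mathbf{H}\mathbf{V}_2\|_F/\delta\leq\|\mathbf{H}\|_F/\delta$. Finally I would invoke the standard principal-angle identity: when $[\mathbf{V}_1,\mathbf{V}_2]$ is an orthonormal basis of $\mathbb{R}^n$ and $\dim\mathbf{V}_1=\dim\hat{\mathbf{V}}_1$, the singular values of $\mathbf{V}_2^T\hat{\mathbf{V}}_1=\mathbf{Q}^T$ equal the sines of the principal angles between $\mathbf{V}_1$ and $\hat{\mathbf{V}}_1$. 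Hence $\|\sin\Theta(\mathbf{V}_1,\hat{\mathbf{V}}_1)\|_F=\|\mathbf{Q}\|_F$, which delivers the claimed bound.

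The main obstacle is the last identification: it rests on the CS-decomposition of an orthonormal matrix, a classical fact that is easy to miscite when the two subspaces have unequal dimensions. In the present setting the two subspaces share the same dimension (both equal the number of eigenvalues inside $S_1$, under the hypothesis that $S_1$ captures exactly the $K$ smallest eigenvalues of each matrix), so the identification is unambiguous. A minor but essential technical point in the Sylvester step is that one must genuinely use $\mathbf{V}_2$ (eigenvectors of $\mathbf{Z}$) rather than $\hat{\mathbf{V}}_2$, because $\delta$ as defined controls separations only against the spectrum of $\mathbf{Z}$; asymmetry in the roles of $\mathbf{Z}$ and $\hat{\mathbf{Z}}$ in the hypothesis is exactly what the argument exploits.
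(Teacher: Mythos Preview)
The paper does not supply a proof of this lemma; it is quoted as the classical Davis--Kahan $\sin\Theta$ theorem and used as a black box in the proof of Theorem~\ref{thm:consistency}. Your Sylvester-equation argument is the standard derivation and is correct: the key observation that $\hat\lambda_i\in S_1$ while $\lambda_j\in\mathrm{eig}(\mathbf{Z})\setminus S_1$ makes $|\hat\lambda_i-\lambda_j|\ge\delta$ match the paper's stated separation exactly, and the identification $\|\sin\Theta\|_F=\|\mathbf{Q}\|_F$ via $P^{\top}P+\mathbf{Q}\mathbf{Q}^{\top}=\mathbf{I}$ is valid under the equal-dimension hypothesis you note.
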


Following Lemma~\ref{davis-kahan}, in our case, $\delta = |\lambda_{K+1} - \lambda_{K}|$ is the eigen-gap of Laplacian $\mathbf{L}$ and $\mathbf{H} = \hat{\mathbf{L}} - \mathbf{L}$ is the perturbation. To ensure $\xi$-consistency, we only need 
\begin{equation}
    \frac{\|\mathbf{L} - \hat{\mathbf{L}}\|_F}{\delta} \leq \xi.
\end{equation}
According to \ref{proof_laplacian_err}, an arbitrary element in $|\mathbf{L} - \hat{\mathbf{L}}|$ is less than $\frac{2\zeta}{\tau(\alpha-\zeta)^2}$ with probability at least $1 - 2\tau \theta$. So we have
\begin{equation}
\begin{aligned}
    \|\mathbf{L} - \hat{\mathbf{L}}\|_F
    &< \sqrt{N\tau}\cdot\frac{2\zeta}{\tau(\alpha-\zeta)^2} \\
    &= \frac{2\zeta\sqrt{N}}{(\alpha-\zeta)^2\sqrt{\tau}}
\end{aligned}
\end{equation}
with probability at least $1 - 2\tau^2N\theta$.

So a sufficient condition for $\xi$-consistent clustering as defined in Definition 4.2 is
\begin{equation}
    \frac{2\zeta\sqrt{N}}{\delta(\alpha - \zeta)^2\sqrt{\tau}} < \xi.
\end{equation}
Setting new $\theta$ to $2\tau^2N\theta$ completes the proof.

\section{Proof for Theorem 4.7}\label{app:proof_correct}


Consider a distribution $\mu_i$, denote $A_{iu} = \max_k(\mathbf{A}_{ik}^{inter})$ and $A_{iv} = \min_k(\mathbf{A}_{ik}^{intra})$. By assumption, for an arbitrary distributions $\mu_j \in \mathcal{N}^{intra}_i$ and $\mu_{j\prime} \in \mathcal{N}^{inter}_i$,
\begin{equation}\label{d_cond1}
\begin{aligned}
     A_{ij} - A_{iu} &\geq \xi \\
     A_{iv} - A_{ij\prime} &\geq \xi \\
\end{aligned}
\end{equation}
To ensure correct clustering of DDSC$_{\text{Sinkhorn}}$, we need to ensure
\begin{equation}
\begin{aligned}
    \hat{A}_{ij} &\geq \hat{A}_{iu} \\
    \hat{A}_{ij\prime} &\leq \hat{A}_{iv} \\
\end{aligned}
\end{equation}
By proof in Appendix~\ref{proof_similarity_err}, we have
\begin{equation}\label{d_err}
    \left|A_{pq} - \hat{A}_{pq}\right| < \zeta
\end{equation}
with probability at least $1-\theta$ for arbitrary $p, q$. Therefore, to ensure correct clustering, we only need
\begin{equation}\label{d_cond2}
\begin{aligned}
    A_{ij} - \zeta &\geq A_{iu} + \zeta \\
    A_{ij\prime} + \zeta &\leq A_{iv} - \zeta \\
\end{aligned}
\end{equation}
Combining inequality~\ref{d_cond1} and inequality~\ref{d_cond2}, we have
\begin{equation}
    \zeta \leq \frac{1}{2}\xi
\end{equation}
Since we need to satisfy the inequality~\eqref{d_err} for $\mu_j$ in the connected component $\tau\text{-NN}_\text{dist}(\mu_i)$, so the probability is $1 - \tau\theta$. Since there are $N\tau$ distributions to consider in total, the probability is $1 - N\tau^2\theta$.\textbf{}

\section{Robustness Against Noise}
We also explore the robustness of different distance matrices against noise. We construct a distance matrix using 2-Wasserstein distance, linear optimal transport distance, MMD distance and Sinkhorn divergence with regularizers $\varepsilon \in \{5, 10, 50, 100, 500\}$. Gaussian noise $\mathcal{N}(0, \sigma^2 I)$ is added to each datapoint. The relative error under noise level $\sigma$ is defined as
$$
\text{Relative Error} = med\left(\tfrac{D_\sigma - D_0}{D_0}\right)
$$
where $D_\sigma$ is the MMD/Sinkhorn distances of the noisy data, $D_0$ is the ground truth MMD/Sinkhorn distances of the original data and $\text{med}()$ is the median operator. We use the median value instead of the mean value to mitigate the effect of potential numerical errors in the calculated distance matrix due to the added noise. We set the noise $\sigma \in [0, 3.0]$. We set the MMD Gaussian kernel standard deviation to 1. The results are shown in Figure~\ref{noisy_test}.

\begin{figure}[ht]
\centering
\centerline{\includegraphics[width=1\columnwidth]{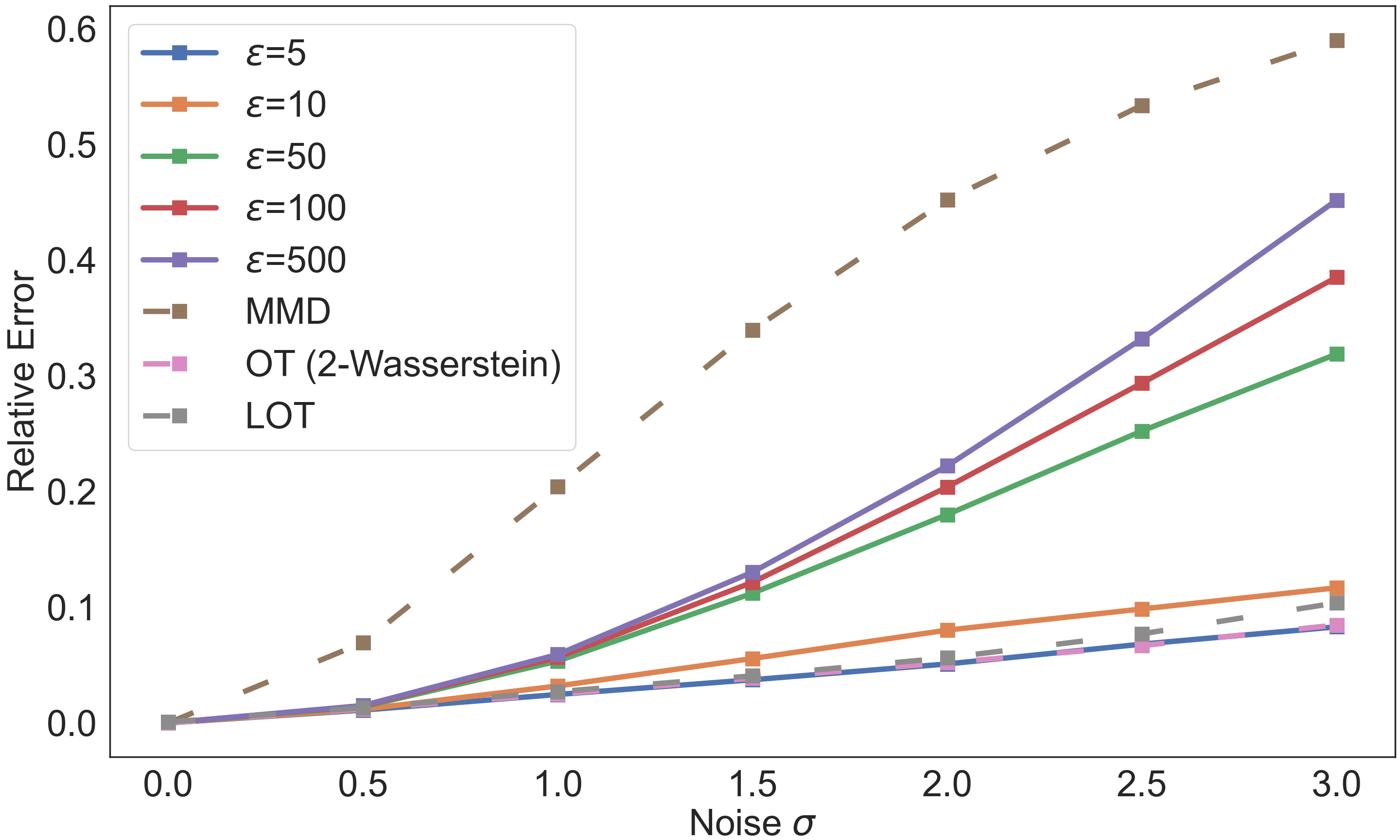}}
\caption{Relative error of distance matrices constructed with different metrics.}
\label{noisy_test}
\end{figure}

From Figure~\ref{noisy_test}, we can see that MMD distance have highest relative error under all noise level. The Sinkhorn divergence with smaller regularizer $\varepsilon$ are more robust against noise. Notice that Sinkhorn divergence interpolate, depending on regularization strength $\varepsilon$, between OT ($\varepsilon = 0$) and MMD ($\varepsilon = +\infty$). So we conjecture that the larger the $\varepsilon$, the less robust the distance matrix against noise.

\section{Preprocessing of real datasets}
We use the pre-trained word-vector dataset GloVe 300d \cite{pennington2014glove} to transform a bag of words into a measure in $\mathbb{R}^{300}$. The TF-IDF scheme normalizes the weight of each word. Before transforming words into vectors, we lower the capital letters, remove all punctuations and stop words and lemmatize each document. We also restrict the number of support points by recursively merging the closest words \citet{pmlr-v139-huang21f}. For image datasets, images are treated as normalized histograms over the pixel locations covered by the digits, where the support vector is the 2D coordinate of a pixel and the weight corresponds to pixel intensity.


\bibliographystyle{elsarticle-num-names} 
\bibliography{refs.bib}





\end{document}